\def\eqref#1{equation~\ref{#1}}
\def\1{\bm{1}}
\def\vzero{{\bm{0}}}
\def\vone{{\bm{1}}}
\DeclareMathAlphabet{\mathsfit}{\encodingdefault}{\sfdefault}{m}{sl}
\SetMathAlphabet{\mathsfit}{bold}{\encodingdefault}{\sfdefault}{bx}{n}
\newcommand{\E}{\mathbb{E}}
\newtheorem{theorem}{Theorem}
\newtheorem{lemma}{Lemma}
\title{Gnothi Seauton: Empowering Faithful Self-Interpretability in Black-Box Transformers}
\author{
    Shaobo Wang$^{1,2}$ \quad Hongxuan Tang$^{2}$ \quad  Mingyang Wang$^2$  \quad Hongrui Zhang$^2$ \\ 
    \ \textbf{Xuyang Liu}$^{2,3}$ \quad \textbf{Weiya Li}$^4$  \quad \textbf{Xuming Hu}$^5$ \quad \textbf{Linfeng Zhang}$^{1,2}$\thanks{Corresponding Author.}
    \\ $^1$School of Artificial Intelligence, Shanghai Jiao Tong University \\
      $^2$Efficient and Precision Intelligent Computing Lab, Shanghai Jiao Tong University   \\
      $^3$Sichuan University \quad $^4$Big Data and AI Lab, ICBC \\
      $^5$Hong Kong University of Science and Technology, Guangzhou
      \\  \texttt{\{shaobowang1009,zhanglinfeng\}@sjtu.edu.cn}
}
\newcommand{\mymethod}{\textit{AutoGnothi}}
\begin{document}

\maketitle

% Abstract
\begin{abstract}
The debate between self-interpretable models and post-hoc explanations for black-box models is central to Explainable AI (XAI). Self-interpretable models, such as concept-based networks, offer insights by connecting decisions to human-understandable concepts but often struggle with performance and scalability. Conversely, post-hoc methods like Shapley values, while theoretically robust, are computationally expensive and resource-intensive. To bridge the gap between these two lines of research, we propose a novel method that combines their strengths, providing theoretically guaranteed self-interpretability for black-box models without compromising prediction accuracy. Specifically, we introduce a parameter-efficient pipeline, \mymethod{}, which integrates a small side network into the black-box model, allowing it to generate Shapley value explanations without changing the original network parameters. This side-tuning approach significantly reduces memory, training, and inference costs, outperforming traditional parameter-efficient methods, where full fine-tuning serves as the optimal baseline. \mymethod{} enables the black-box model to predict and explain its predictions with minimal overhead. Extensive experiments show that \mymethod{} offers accurate explanations for both vision and language tasks, delivering superior computational efficiency with comparable interpretability.
\end{abstract}

% \vspace{-4mm}
% \begin{quote}
%     ``The model itself must be its own foundation.'' \textit{--- Gilles Deleuze, Difference and Repetition.} \wsb{to be checked}
% \end{quote}

\begin{figure}[h]
    \vspace{-20pt}
    \includegraphics[width=0.99\textwidth]{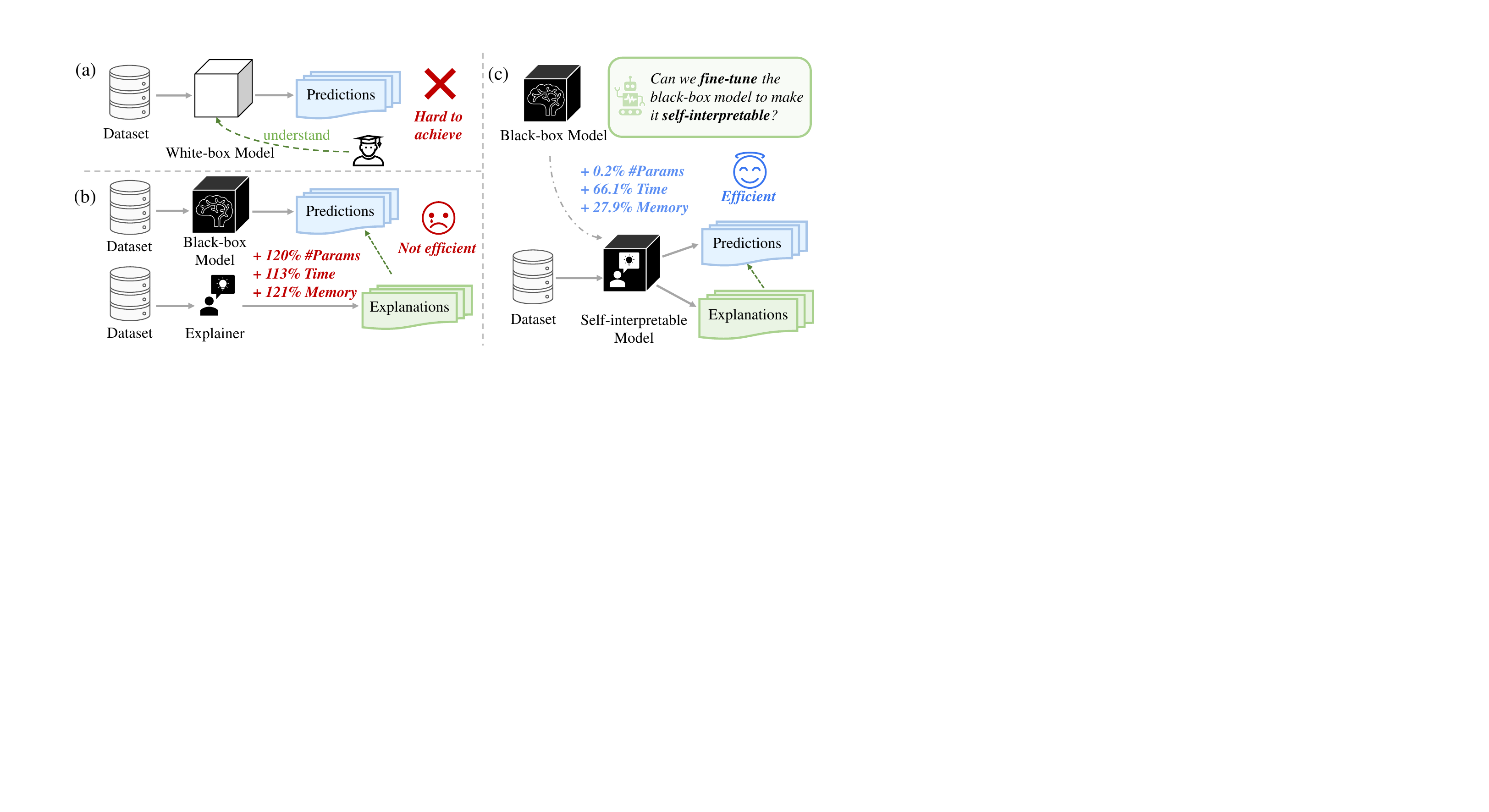}
    \vspace{-5pt}
    \caption{\textbf{Different paradigms towards XAI.} (a) The ideal paradigm for XAI envisions using white-box models for prediction, which are inherently self-interpretable by design but hard to achieve. (b) The previous paradigm involves post-hoc explanations of black-box models by training a separate, heavy-weight explainer. (c) We propose a novel parameter-efficient paradigm, \mymethod{}, which fine-tunes the black-box model to make it self-interpretable.}
    \label{fig:paradigm}
    \vspace{-5pt}
\end{figure}

% Introduction
\section{Introduction} \label{sec:introduction}
Explainable AI (XAI) has gained increasing significance as AI systems are widely deployed in both vision~\citep{ViT,CLIP,SAM} and language domains~\citep{BERT,GPT3,GPT4}. Ensuring interpretability in these systems is vital for fostering trust, ensuring fairness, and adhering to legal standards, particularly for complex models such as transformers. As illustrated in Figure~\ref{fig:paradigm}(a), the ideal paradigm for XAI involves designing inherently transparent models that deliver superior performance. However, given the challenges in achieving this, current XAI methodologies can be broadly classified into two main categories: developing \textit{self-interpretable models} and providing \textit{post-hoc explanations} for black-box models.

\noindent \textbf{Designing Self-Interpretable Models:}
Several notable efforts have focused on designing self-interpretable models that are grounded in solid mathematical foundations or learned concepts. Among these, concept-based networks have emerged as a representative approach linking model decisions to predefined, human-understandable concepts~\citep{TCAV,CBM,SENN}. However, incorporating hand-crafted concepts often introduces a trade-off between interpretability and performance, as these models typically compromise the performance of the primary task. Moreover, such methods are often closely tied to specific architectures, which limits their scalability and transferability to other tasks. Furthermore, the explanations generated by concept-based models often lack a rigorous theoretical foundation, raising concerns about their reliability and overall trustworthiness.

\begin{wrapfigure}{r}{0.45\textwidth}
    \centering
    \vspace{-20pt}
    \includegraphics[width=\linewidth]{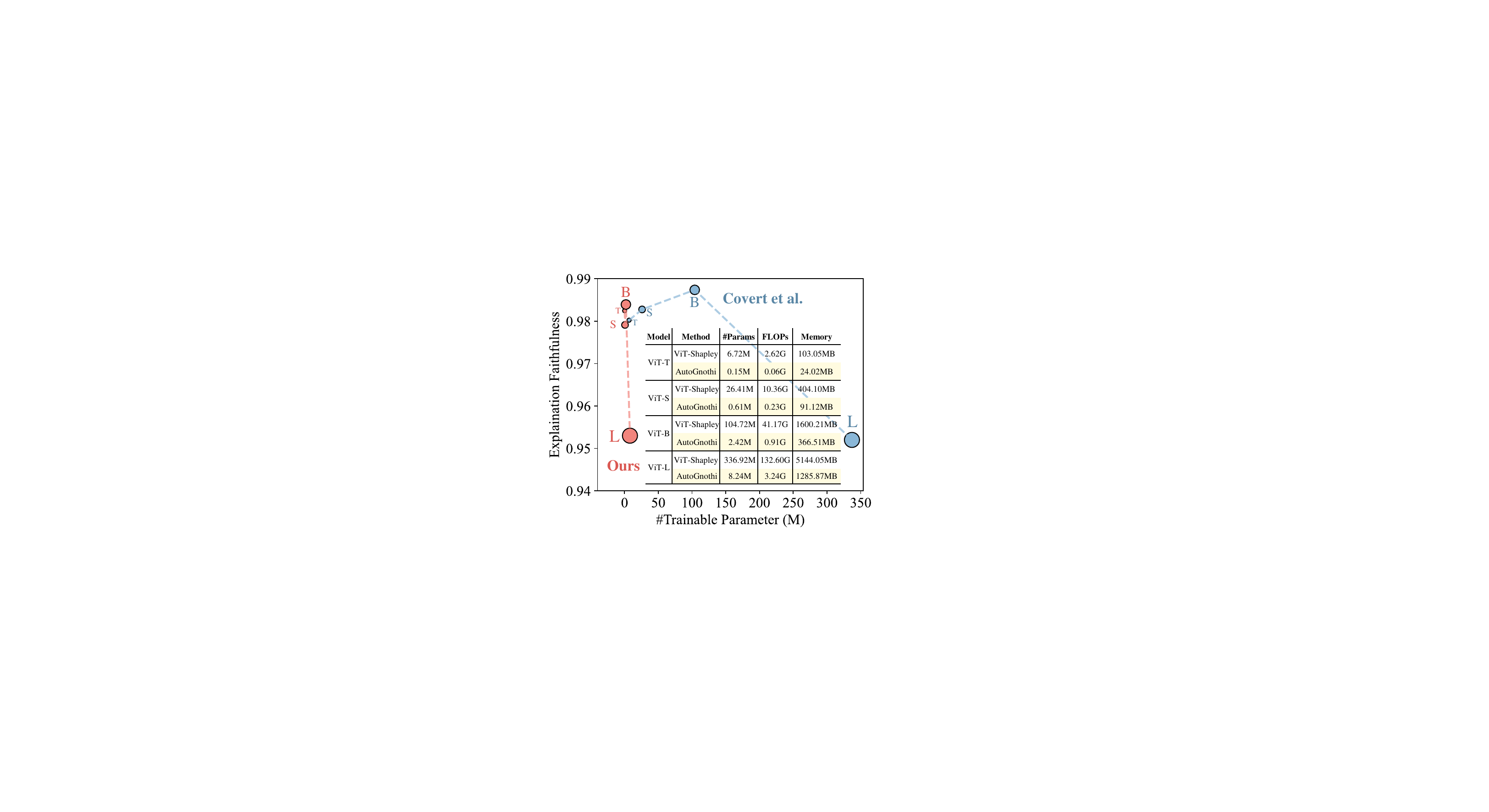}
    \vspace{-20pt}
    \caption{\textbf{Explanation quality on the ImageNette dataset using different ViTs.} Our \mymethod{} significantly reduces the number of trainable parameters, computational costs (FLOPs), and training GPU memory storage without compromising explanation quality.}
    \label{fig:intuitive_results}
    \vspace{-10pt}
\end{wrapfigure}

\noindent \textbf{Explaining Black-Box Models:} Given the challenges of designing self-interpretable models for practical applications, post-hoc explanations for black-box models have become a widely adopted alternative. Among these, Shapley value-based methods~\citep{shapley} are particularly valued for their theoretical rigor and adherence to four principled axioms~\citep{young1985monotonic}. However, calculating exact Shapley values involves evaluating all possible feature combinations, which scales exponentially with the number of features, making direct computation impractical for models with high-dimensional inputs. To alleviate this, methods like FastSHAP~\citep{fastshap} and ViT-Shapley~\citep{vitshap} employ proxy explainers that estimate Shapley values during inference, significantly reducing the number of evaluations needed. While these approaches reduce some computational costs, training a separate explainer remains resource-intensive. For example, training a Vision Transformer (ViT) explainer requires more than twice the training GPU memory compared to the ViT classifier itself. Moreover, solely depending on post-hoc explanations for black-box models is not ideal in high-stakes decision-making scenarios, where immediate and reliable interpretability is required~\citep{rudin2019stop}.%\wsb{can discuss more about other methods, less shap methods, cause we will discuss it later in related works.}

To bridge the gap between existing methods and address the aforementioned challenges, the core objective of our research is to \emph{achieve theoretically guaranteed self-interpretability in advanced neural networks without sacrificing prediction performance, while minimizing training, memory, and inference costs}. To this end, we propose a novel paradigm, \mymethod{}, which leverages parameter-efficient transfer learning (PETL) to substantially reduce the high training, memory, and inference costs associated with obtaining explainers. As depicted in Figure~\ref{fig:pipeline}(a), traditional model-specific methods require two training stages: (i) fine-tuning a pre-trained model into a surrogate model, and (ii) training an explainer using the surrogate model. During inference, the original model is used for prediction, while a separate explainer network generates explanations, leading to two inference passes and double the storage overhead. In contrast, \mymethod{} utilizes side-tuning to reduce both training and memory costs, as shown in Figure~\ref{fig:pipeline}(b). By incorporating an additive side branch parallel to the pre-trained model, we efficiently obtain a surrogate side network through side-tuning. We then apply the same strategy to develop the explainer side network, enabling simultaneous prediction and explanation in a single inference step. An illustrative example comparing the efficiency of \mymethod{} with previous methods is presented in Figure~\ref{fig:intuitive_results}.

More importantly, \mymethod{} achieves self-interpretability without compromising prediction accuracy. Unlike a simple application of PETL, where full fine-tuning is considered the optimal baseline, our approach goes further. Experimental results show that relying on full fine-tuning to achieve self-interpretability often leads to degraded performance in either prediction or explanation tasks. In contrast, \mymethod{} maintains prediction accuracy while achieving self-interpretability by leveraging the intrinsic correlation between prediction and explanation. Beyond classical PETL, which primarily focuses on training efficiency, \mymethod{} also enhances inference efficiency through self-interpretation while keeping its faithfulness (see Section~\ref{sec:discussion} for further discussion). Our key contributions are as follows:
% 我们的方法不仅仅是PETL 
% 我们比直接fine-tune好
% 我们不仅是PETL for training efficiency，更关键是inference efficiency，实现了自解释
\begin{enumerate}[leftmargin=10pt, topsep=0pt, itemsep=1pt, partopsep=1pt, parsep=1pt] 
    \item \textbf{Efficient Explanation}: We introduce a novel PETL pipeline, \mymethod{}, which enables any black-box models, \emph{e.g.}, transformers,  to become self-interpretable without affecting the original task parameters. By integrating and fine-tuning an additive side network, suprisingly surpasses previous methods in training, inference, and memory efficiency.
    \item \textbf{Self-interpretability}: We achieve theoretically guaranteed self-interpretability for the black-box model with the Shapley value, without any influence on the original model's prediction accuracy.
    \item \textbf{Broad Applications on both Vision and Language Models}: We conducted experiments on the most widely used models, including ViT~\citep{ViT} for image classification and BERT~\citep{BERT} for sentimental analysis, showing that our methods outperform well on explanation quality. Specifically, for the ViT-base model pre-trained on ImageNette, our surrogates achieve a $97\%$ reduction in trainable parameters and $72\%$ reduction in training memory with comparable accuracy. For explainers, we achieve a $98\%$ reduction in trainable parameters, $77\%$ reduction in training memory. For generating explanation, \mymethod{} achieves $54\%$ reduction in inference computation, $44\%$ reduction in inference time, and a total parameter reduction of $54\%$.
\end{enumerate}

\begin{figure}[tb!]
    \centering
    \vspace{-25pt}
\includegraphics[width=0.95\linewidth]{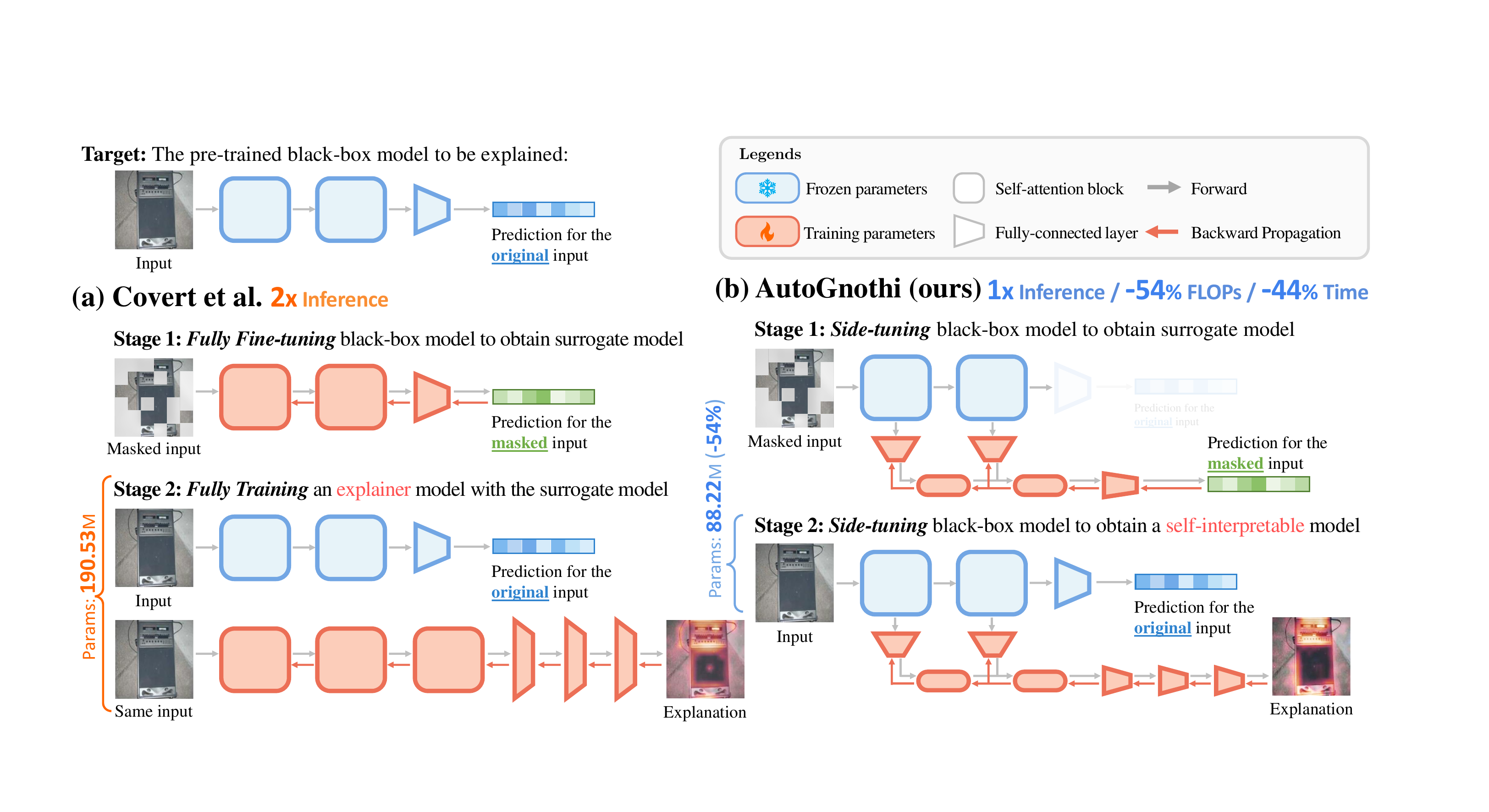}
\caption{\textbf{Overview of \mymethod{} compared to prior work.} (a) ViT-Shapley~\citep{vitshap} fully fine-tunes the black-box model to create a surrogate model, then trains a separate explainer based on the surrogate, which is resource-intensive. (b) We employ side-tuning to efficiently obtain both the black-box model and explainer, significantly reducing training costs. \mymethod{} uses a single model to simultaneously generate predictions and explanations, lowering inference costs by leveraging shared features. In contrast, ViT-Shapley needs to load two models for prediction and explanation, respectively, and infers two times. \mymethod{} enables self-interpretability for an arbitrary black-box model. We ignore the positional encoding associated with the pipeline.}
    \label{fig:pipeline}
    \vspace{-15pt}
\end{figure}

% Related work
\section{Related work} \label{sec:related}
\noindent \textbf{Explaining Black-Box Models with Shapley Values:}
Among post-hoc explanation methods, the Shapley value~\citep{shapley} is widely recognized as a faithful and theoretically sound metric for feature attribution, uniquely satisfying four key axioms: \textit{efficiency}, \textit{symmetry}, \textit{linearity}, and \textit{dummy}. However, computing Shapley values is computationally expensive, requiring $\mathcal{O}(2^n)$ operations to calculate a single Shapley value for one feature in a set of size $n$. To alleviate this computational burden, various approaches have been proposed to expedite Shapley value computation, which can be broadly divided into \textit{model-agnostic} and \textit{model-specific} methods~\citep{chen2023algorithms}. Model-agnostic techniques, such as KernelSHAP~\citep{kernelshap} and its enhancements~\citep{Improvingkernelshap}, approximate Shapley values by sampling subsets of feature combinations. Nevertheless, when the feature set is large, the sampling cost remains prohibitive, and reducing this cost compromises the accuracy of the explanations, as fewer samples lead to less reliable estimates of feature importance. Conversely, model-specific methods, such as FastSHAP~\citep{fastshap} and ViT-Shapley~\citep{vitshap}, employ a trained proxy explainer to accelerate the estimation during inference, though these methods still involve significant training costs to develop the explainer.

\noindent \textbf{Parameter Efficient Transfer Learning (PETL):}  
PETL aims to achieve the performance of full fine-tuning while significantly reducing training costs by updating only a small subset of parameters. In this context, Adapters~\citep{Adapter,vitadapter} introduce trainable bottleneck modules into transformer layers, enabling models to deliver competitive results with minimal parameter adjustments. Another widely adopted method, LoRA~\citep{LoRA}, applies low-rank decomposition to the attention layer weights. Our work aligns more closely with side-tuning methods, where Side-Tuning~\citep{Side-Tuning} integrates an auxiliary network that merges its representations with the backbone at the final layer, demonstrating effectiveness across diverse tasks in models like ResNet and BERT. LST~\citep{LST} further improves this approach by reducing memory consumption through a ladder side network design. However, none of these methods explore the transfer of interpretability from the main model to the side network, leaving this a largely unexplored area in side-tuning and PETL research.

% Background
\section{Background} \label{sec:background}
% This section provides a concise introduction to the Shapley value, followed by a discussion of prior methods for its efficient approximation.

\subsection{Shapley Values}
\label{sec:shapley_value}

The Shapley value, originally introduced in game theory~\citep{shapley}, provides a method to fairly distribute rewards among players in coalitional games. In this framework, a set function assigns a value to any subset of players, corresponding to the reward earned by that subset. In machine learning scenarios, input variables are typically regarded as players, and a deep neural network (DNN) serves as the value function, assigning importance (saliency) to each input variable.

Let $s \in \{0,1\}^d$ be an indicator vector representing a specific variable subset for a sample $x = [x_1, x_2, \ldots, x_d]^\top \in \mathbb{R}^d$. Specifically, $x_{s}$ denotes the variables indicated by $s$, while those not in $s$ are replaced by a masked value (\emph{e.g.}, a baseline value). Let $e_i \in \mathbb{R}^d$ denote the vector with a one in the $i$-th position and zeros elsewhere. For a game involving $d$ players—or equivalently, a DNN $v: \{0,1\}^d \rightarrow \mathbb{R}$ with $d$ input variables—the Shapley values are denoted by $\phi_v(x_1), \ldots, \phi_v(x_d)$. Each $\phi_v(x_i) \in \mathbb{R}$ represents the value attributed to the $i$-th input variable $x_i$ in the sample $x$. The Shapley value $\phi_v(x_i)$ is computed as follows:
\begin{equation}
\label{eq:shapley_value}
\phi_v(x_i) = \frac{1}{d} \sum_{s : s_i = 0} \binom{d - 1}{\vone^\top s} \left( v(x_{s + e_i}) - v(x_s) \right).
\end{equation}
Intuitively, Eq.~(\ref{eq:shapley_value}) captures the average marginal contribution of the $i$-th player to the overall reward by considering all possible subsets in which player $i$ could be included. Shapley values satisfy four key axioms: \textit{linearity, dummy player, symmetry}, and \textit{efficiency}~\citep{young1985monotonic}. These axioms ensure a fair and consistent distribution of the total reward among all players.

\subsection{Model-Based Estimation of Shapley Values}
\label{sec:previous_estimation}

Calculating Shapley values to explain individual predictions presents substantial computational challenges~\citep{chen2023algorithms}. To mitigate this burden, these values are typically approximated using sampling-based estimators, such as those in~\citep{kernelshap, Improvingkernelshap}, though the sampling cost remains considerable. Recently, a more efficient model-based approach, introduced in~\citep{fastshap}, accelerates the approximation by training a proxy explainer to compute Shapley values through a single model inference. However, this method has not been validated on advanced neural architectures such as transformers.

Building on this, ViT Shapley~\citep{vitshap} was introduced to train a ViT explainer that interprets a pre-trained ViT model $f$. As shown in Figure~\ref{fig:pipeline}(a), the learning process of the explainer consists of two stages. In stage 1, a surrogate model $g_\beta$ with parameters $\beta$ is generated by fine-tuning the pre-trained ViT classifier $f$ to handle partial information, which is used for calculating the masked variables in the Shapley value. This involves aligning the output distributions of the surrogate model $g_\beta$ with the classifier $f$. The surrogate is optimized with the following  objective:
\begin{equation}
\label{eq:surrogate}
    \mathcal{L}_\textrm{surr}(\beta) = \mathop{\mathbb{E}}_{\substack{ x \sim p(x),  s \sim \mathcal{U}(0,d)}}\left[ D_{\text{KL}}\left( f(x) \| g_\beta(x_s) \right) \right],
\end{equation}
where $\mathcal{U}(0,d)$ is a uniform distribution for sampling $s$. Then, in stage 2, an explainer $\phi_\theta$ with parameters $\theta$ is trained to generate explanations of the predictions of the black-box ViT $f$, utilizing the surrogate model $g_\beta$. This optimization method was first proposed by~\citep{Charnes1988} and later applied in~\citep{kernelshap,fastshap,vitshap}. Let $p(x)$ and $p(x,y)$ denote the distributions of input and input-label pairs, respectively. Specifically, the loss for training the explainer is:
\begin{align}
\label{eq:explainer}
    \mathcal{L}_\textrm{exp}(\theta) = & \mathop{\mathbb{E}}_{\substack{(x, y) \sim p(x, y), s \sim q(s)}}\left[ \left( g_\beta(x_s| y) - g_\beta(x_{\vzero}| y) - s^\top \phi_\theta(x, y) \right)^2 \right] \\
    & \text{s.t.} \quad \vone^\top \phi_{\theta}(x, y) = g_\beta(x_{\vone}|y) - g_\beta(x_{\vzero}|y) \quad \forall (x, y), \tag{Efficiency}
\end{align}
where the constraint in the loss function is enforced to satisfy the efficiency axiom of the Shapley value, and $q(s)$ is defined with the Shapley kernel~\citep{Charnes1988} as follows:
\begin{equation}
\label{eq:shapley_kernel}
    q(s) \propto \frac{d - 1}{\binom{d}{\vone^\top s} (\vone^\top s) (d - \vone^\top s)} \quad \forall s:0<\vone^\top s < d, \tag{Shapley Kernel}
\end{equation}
In addition, the ViT explainer $\phi_{\theta}$ has the same number of multi-head self-attention (MSA) layers as the feature backbone, and includes three additional MSA layers and a fully-connected (FC) layer as the explanation head. By learning the explainer $\phi_{\theta}$ to estimate Shapley values, the computational cost is reduced to a constant complexity of $\mathcal{O}(1)$.

% Method
\section{Method} \label{sec:method}
\begin{figure}[tb!]
    \centering
    \vspace{-20pt}
\includegraphics[width=0.99\linewidth]{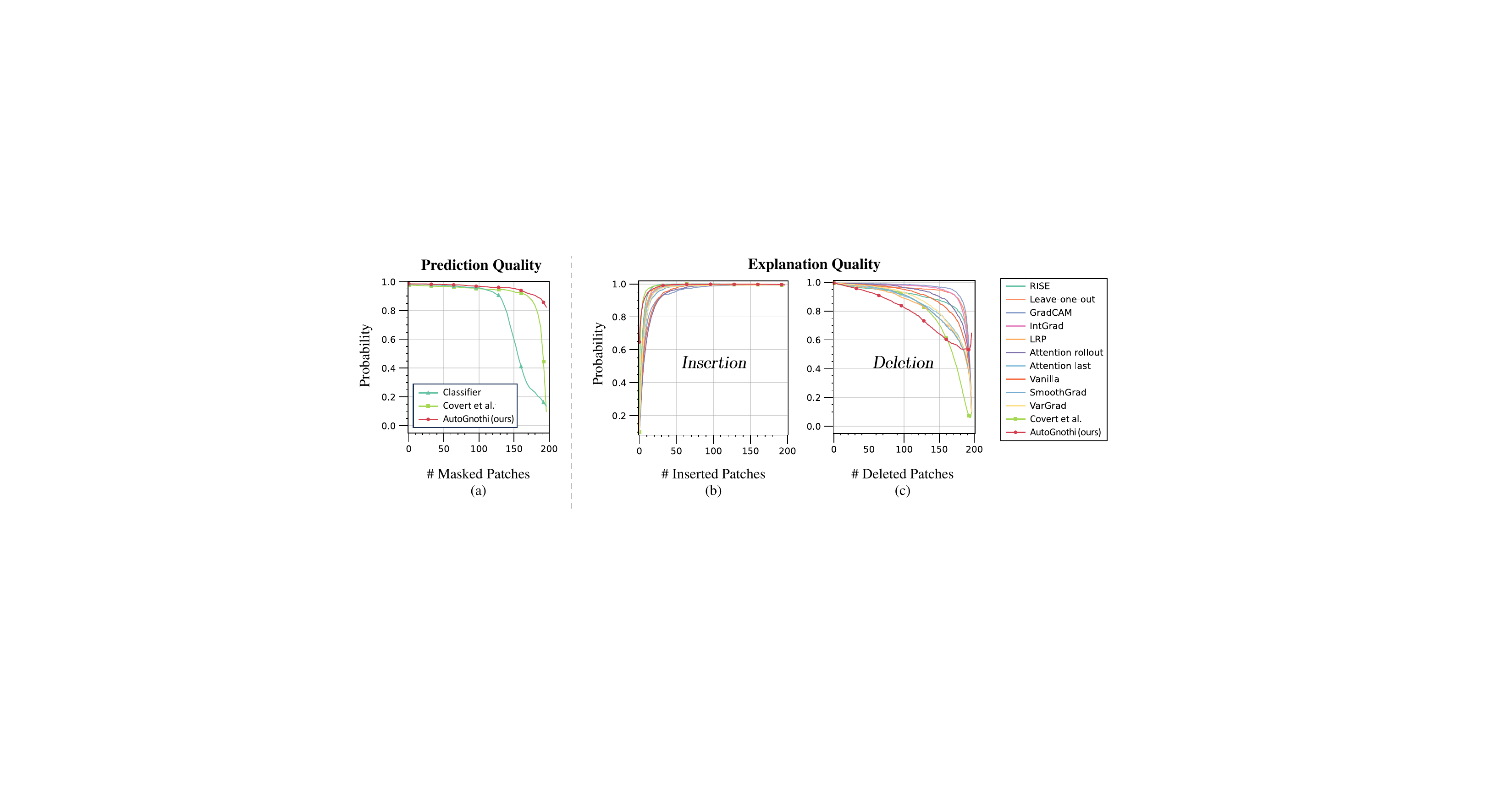}
\vspace{-5pt}
\caption{\textbf{Training performance of surrogate and explainer models.} (a) Prediction accuracy of masked inputs for the original classifier, the surrogate model trained with ViT Shapley~\citep{vitshap}, and our \mymethod{}. \mymethod{} shows greater robustness as the number of masked patches increases. For each mask size, we randomly sampled 100 images and generated 10 random masks. The curve represents the average prediction probability. (b) Explanation quality, measured by insertion and deletion metrics, for various explanation methods. We randomly sampled 1,000 images and averaged the prediction probabilities to assess insertion and deletion performance. All experiments were conducted on the ImageNette dataset using the ViT-base model.}
    \label{fig:training_results}
    \vspace{-15pt}
\end{figure}

\subsection{Efficiently Training the Shapley Value Explainer for Black-Box Models} \label{sec:formulation}

As discussed in Section~\ref{sec:previous_estimation}, existing methods for approximating Shapley values require two stages. First, the black-box model~$f$ is fully fine-tuned to obtain a surrogate model~$g$ with the same trainable parameters and memory cost as~$f$. Then, an explainer~$\phi$ is fully trained using $g$. These stages at least double the training, memory, and inference costs compared to using the black-box model~$f$ alone, making these methods impractical for large models.

To improve training, memory, and inference efficiency, we propose a side-tuning pipeline called \textit{\mymethod{}}, as shown in Figure~\ref{fig:pipeline}(b). Building on ideas from PETL, we adapt Ladder Side-Tuning~(LST)~\citep{LST} by incorporating an additive side network. This side network separates the trainable parameters from the backbone model~$f$ and adapts the model to a different task. It is a lightweight version of~$f$, with weights and hidden state dimensions scaled by a factor of $1/r$, where $r$ is a reduction factor (\emph{e.g.}, $r = 4$ or $8$). For instance, if the backbone $f$ has a $768$-dimensional hidden state, then with $r = 8$, the side network has a hidden state dimension of $96$. By computing gradients solely for the side network, this design avoids a backward pass through the main backbone, improving training and memory efficiency. The formulation combines the frozen pre-trained backbone and the side-tuner with learnable parameters $\beta$ as:
\begin{equation}
\label{eq:surrogate_LST}
y^{\text{main}} = \underbrace{f}_{\text{frozen}}(x),  \quad y^{\text{surr}} = \underbrace{g_{\beta}}_{\text{trainable}}(x_s) , \quad {\phi}^{\text{exp}} =  \underbrace{\phi_\theta}_{\text{trainable}}(x),
\end{equation}
where $g_{\beta}$ is trained by minimizing the loss in Eq.(\ref{eq:surrogate}), and $\phi_{\theta}$ is trained by minimizing the loss in Eq.(\ref{eq:explainer}), respectively.

\begin{table}[tb!]
\vspace{-15pt}
\caption{Comparison of training and memory efficiency across different models. We evaluated memory consumption and trainable parameters for previous methods~\citep{vitshap} and \mymethod{} across a range of models and tasks. For surrogate models, we measured classification accuracy, while for explainers, we assessed explanation quality using insertion and deletion metrics. It is important to note that prediction accuracy for the classifier is evaluated on normal inputs, whereas for the surrogate model, accuracy is measured on masked inputs.}
\label{tab:training_efficiency}
\centering
\vspace{-5pt}
\resizebox{\textwidth}{!}{
\begin{tabular}{@{}cc|cccc|c|c|c@{}}
\toprule
\multicolumn{2}{c|}{Dataset} &
  \multicolumn{4}{c|}{ImageNette} &
  MURA &
  Pet &
  Yelp \\ \midrule
\multicolumn{2}{c|}{Model} &
  ViT-T &
  ViT-S &
  ViT-B &
  ViT-L &
  ViT-B &
  ViT-B &
  BERT-B \\ \midrule
\multicolumn{1}{c|}{\multirow{3}{*}{\begin{tabular}[c]{@{}c@{}} Classifier \\ to be explained \end{tabular}}} &
  Memory (MB)&
  84.90&
  331.80&
  1311.61&
  4631.25&
  1311.50&
  1311.93&
  1676.59\\
  \multicolumn{1}{c|}{} &
  \#Params (M)&
  5.53&
  21.67&
  85.81&
  303.31&
  85.80&
  85.83&
  109.48\\ 
  \multicolumn{1}{c|}{} &
  Accuracy (↑)&
  0.9791&
  0.9944&
  0.9944&
  0.9964&
  0.8186&
  0.9469&
  0.9010\\ \midrule \midrule
\multicolumn{1}{c|}{\multirow{3}{*}{\begin{tabular}[c]{@{}c@{}} Surrogate\\(Covert et al.)\end{tabular}}} &
  Memory (MB)&
  84.90&
  331.80&
  1311.61&
  4631.25&
  1311.50&
  1311.93&
  1676.59\\
\multicolumn{1}{c|}{} &
  \#Params (M)&
  5.53&
  21.67&
  85.81&
  303.31&
  85.80&
  85.83&
  109.48\\
\multicolumn{1}{c|}{} &
  Accuracy (↑) &
  0.9822&
  0.9934&
  0.9939&
  0.9975&
  0.8233&
  0.9469&
  0.9490\\ \midrule
\multicolumn{1}{c|}{\multirow{3}{*}{\begin{tabular}[c]{@{}c@{}}Surrogate\\(\mymethod{})\end{tabular}}} &
  Memory (MB)&
  23.83 {\color[HTML]{18A6C2} (-72\%)}&
  92.38 {\color[HTML]{18A6C2} (-72\%)}&
  363.64 {\color[HTML]{18A6C2} (-72\%)}&
  1280.80 {\color[HTML]{18A6C2} (-72\%)}&
  438.10 {\color[HTML]{18A6C2} (-67\%)}&
  363.76 {\color[HTML]{18A6C2} (-72\%)}&
  532.71 {\color[HTML]{18A6C2} (-68\%)}\\
\multicolumn{1}{c|}{} &
  \#Params (M)&
  0.14 {\color[HTML]{18A6C2} (-97\%)}&
  0.56 {\color[HTML]{18A6C2} (-97\%)}&
  2.23 {\color[HTML]{18A6C2} (-97\%)}&
  7.91 {\color[HTML]{18A6C2} (-97\%)}&
  7.11 {\color[HTML]{18A6C2} (-92\%)}&
  2.23 {\color[HTML]{18A6C2} (-97\%)}&
  7.15 {\color[HTML]{18A6C2} (-93\%)}\\
\multicolumn{1}{c|}{} &
  Accuracy (↑) &
  0.9791&
  0.9939&
  0.9959&
  0.9959&
  0.8139&
  0.9422&
  0.9280\\ \midrule \midrule
\multicolumn{1}{c|}{\multirow{4}{*}{\begin{tabular}[c]{@{}c@{}}Explainer \\ (Covert et al.)\end{tabular}}} &
  Memory (MB)&
  103.05&
  404.10&
  1600.21&
  5144.05&
  1599.79&
  1601.47&
  1955.87\\
\multicolumn{1}{c|}{} &
  \#Params (M)&
  6.72&
  26.41&
  104.72&
  336.92&
  104.69&
  104.80&
  127.79\\
\multicolumn{1}{c|}{} &
  Insertion (↑) &
  0.9824&
  0.9828&
  0.9839&
  0.9843&
  0.9319&
  0.9422&
  0.9620\\
\multicolumn{1}{c|}{} &
  Deletion (↓) &
  0.5243&
  0.6865&
  0.8121&
  0.7646&
  0.4199&
  0.4958&
  0.1725\\ \midrule
\multicolumn{1}{c|}{\multirow{4}{*}{\begin{tabular}[c]{@{}c@{}} Explainer\\(\mymethod{})\end{tabular}}} &
  Memory (MB)&
  24.02 {\color[HTML]{18A6C2} (-77\%)}&
  93.12 {\color[HTML]{18A6C2} (-77\%)}&
  366.51 {\color[HTML]{18A6C2} (-77\%)}&
  1285.87 {\color[HTML]{18A6C2} (-75\%)}&
  449.38 {\color[HTML]{18A6C2} (-72\%)}&
  366.75 {\color[HTML]{18A6C2} (-77\%)}&
  685.32 {\color[HTML]{18A6C2} (-65\%)}\\
\multicolumn{1}{c|}{} &
  \#Params (M)&
  0.15 {\color[HTML]{18A6C2} (-98\%)}&
  0.61 {\color[HTML]{18A6C2} (-98\%)}&
  2.42 {\color[HTML]{18A6C2} (-98\%)}&
  8.24 {\color[HTML]{18A6C2} (-98\%)}&
  7.85 {\color[HTML]{18A6C2} (-93\%)}&
  2.43 {\color[HTML]{18A6C2} (-98\%)}&
  17.15 {\color[HTML]{18A6C2} (-87\%)}\\
\multicolumn{1}{c|}{} &
  Insertion (↑) &
  0.9802&
  0.9791&
  0.9874&
  0.9837&
  0.9292&
  0.9384&
  0.9588\\
\multicolumn{1}{c|}{} &
  Deletion (↓) &
  0.5097&
  0.6667&
  0.7954&
  0.6570&
  0.4116&
  0.4888&
  0.1004\\ 
 \bottomrule
\end{tabular}
}
\vspace{-10pt}
\end{table}

\subsubsection{Obtaining the Surrogate}

To obtain the surrogate model, \mymethod{} applies LST directly to the black-box model $f$, utilizing the additive side branch $g$ with parameters $\beta$ to predict the masked inputs $x_{s}$ of sample $x$. Let $f^{(i)}$ and $g^{(i)}$ denote the $i$-th MSA block of the main model $f$ and the surrogate branch $g$, respectively. Assume there are $N$ MSA blocks in total. Let $z_1^{\text{main}}$ denote the output for masked input $x_s$ of the first MSA layer $f^{(1)}$ of $f$, \emph{i.e.}, $z_1^{\text{main}}=f^{(1)}(x_s)$. The forward process of the frozen main model $f$ with the side-tuning branch $g$ is:
\begin{equation}
z_i^{\text{main}} = f^{(i)}(z_{i-1}^{\text{main}}), \quad
z_i^{\text{surr}} = g^{(i)}(\text{FC}^{(i)}(z_{i}^{\text{main}})).
\end{equation}
After $N$ MSA blocks, an FC head is applied to generate the prediction for the partial information, \emph{i.e.}, $y^{\text{surr}}=\text{FC}_{\text{head}}(z_{N}^{\text{surr}}))$. The convergence of the surrogate model is analyzed as follows:
% \begin{lemma}[Proof in Appendix~\ref{app:proofs}]
% \label{lemma:surrogate}
% For a single input $x$, the expected loss under Eq.~(\ref{eq:surrogate}) is $\mu$-strongly convex, where $\mu$ is the minimal eigenvalue of the Hessian of $\mathcal{L}_{\text{surr}}(\beta)$.
% \end{lemma}

\begin{theorem}[Proof in Appendix~\ref{app:proofs}]
\label{theorem:surrogate}
Let the surrogate model be trained using gradient descent with step size $\alpha$ for $t$ iterations. The expected KL divergence between the original model’s predictions $f(x)$ and the surrogate model’s predictions $g_{\beta}(x_s)$ is upper-bounded by:
\begin{equation}
\mathbb{E}_{x \sim p(x), s \sim \mathcal{U}(0, d)} \left[ D_{\text{KL}}\left( f(x) \| g_{\beta}(x_s) \right) \right] \leq \frac{1}{2\mu} (1 - \mu \alpha)^t \left( \mathcal{L}_{\text{surr}}(\beta_0) - \mathcal{L}_{\text{surr}}^\star \right),
\end{equation}
where $\beta_0$ is the initial parameter value,  $\mathcal{L}_{\text{surr}}^\star$ is the optimal value during optimization, and $\mu$ is the minimal eigenvalue of the Hessian of $\mathcal{L}_{\text{surr}}$.
\end{theorem}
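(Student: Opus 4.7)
The plan is to observe that the left-hand side is, by definition, exactly the training loss $\mathcal{L}_{\text{surr}}(\beta_t)$, and then to instantiate a classical linear-convergence argument for gradient descent under a curvature assumption inherited from the minimum Hessian eigenvalue $\mu$.

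First, I would unfold the definition in Eq.~(\ref{eq:surrogate}): the expectation $\mathbb{E}_{x,s}[D_{\mathrm{KL}}(f(x)\|g_{\beta_t}(x_s))]$ is literally $\mathcal{L}_{\text{surr}}(\beta_t)$, so the claim reduces to an upper bound on $\mathcal{L}_{\text{surr}}(\beta_t) - \mathcal{L}_{\text{surr}}^\star$ (the constant $\mathcal{L}_{\text{surr}}^\star$ absorbing any irreducible KL floor). Second, I would translate the spectral assumption into a functional inequality: $\mu$ being the minimum eigenvalue of $\nabla^2 \mathcal{L}_{\text{surr}}$ along the trajectory gives $\nabla^2 \mathcal{L}_{\text{surr}}(\beta) \succeq \mu I$, hence $\mu$-strong convexity and the Polyak--{\L}ojasiewicz (PL) inequality
\[
\mathcal{L}_{\text{surr}}(\beta) - \mathcal{L}_{\text{surr}}^\star \;\leq\; \tfrac{1}{2\mu}\,\|\nabla \mathcal{L}_{\text{surr}}(\beta)\|^2 .
\]
This PL step is the source of the $\tfrac{1}{2\mu}$ prefactor in the stated bound.

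Third, I would feed this into the textbook descent-lemma argument. Assuming $L$-smoothness of $\mathcal{L}_{\text{surr}}$ with step size $\alpha$ in the stable range $\alpha \leq 1/L$, combining the quadratic descent inequality with PL gives the one-step contraction
\[
\mathcal{L}_{\text{surr}}(\beta_{t+1}) - \mathcal{L}_{\text{surr}}^\star \;\leq\; (1-\mu\alpha)\,\bigl(\mathcal{L}_{\text{surr}}(\beta_t) - \mathcal{L}_{\text{surr}}^\star\bigr),
\]
which unrolls to $(1-\mu\alpha)^t(\mathcal{L}_{\text{surr}}(\beta_0) - \mathcal{L}_{\text{surr}}^\star)$. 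Applying the PL step above once more (in its gradient-norm form, together with the analogous gradient-norm contraction $\|\nabla\mathcal{L}_{\text{surr}}(\beta_t)\|^2 \leq (1-\mu\alpha)^t \|\nabla\mathcal{L}_{\text{surr}}(\beta_0)\|^2$) produces exactly the claimed $\tfrac{1}{2\mu}(1-\mu\alpha)^t(\mathcal{L}_{\text{surr}}(\beta_0) - \mathcal{L}_{\text{surr}}^\star)$ form.

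The main obstacle is justifying the $\mu$-strong convexity / PL hypothesis. Since $\mathcal{L}_{\text{surr}}$ is non-convex in the side-network parameters $\beta$ whenever $g_\beta$ is a transformer, this must be interpreted as an effective local curvature condition along the iterate trajectory (or within a basin around an optimum), not as a global property. I would therefore state it cleanly as an assumption, verify that the step-size restriction implied by $L$-smoothness is compatible with the claimed contraction rate, and remark that since $\mathcal{L}_{\text{surr}}$ already bakes in the expectations over $x$ and $s$, no additional care is required in commuting expectation with the curvature analysis.
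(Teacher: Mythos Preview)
Your proposal matches the paper's proof: both reduce the claim to the textbook linear-convergence bound for gradient descent under $\mu$-strong convexity, with the paper stating a preliminary lemma that $\mathcal{L}_{\text{surr}}$ is $\mu$-strongly convex and then invoking the standard contraction $\mathcal{L}_{\text{surr}}(\beta_t)-\mathcal{L}_{\text{surr}}^\star \le (1-\mu\alpha)^t(\mathcal{L}_{\text{surr}}(\beta_0)-\mathcal{L}_{\text{surr}}^\star)$ before substituting. You are actually more careful than the paper on the one delicate point: the paper's lemma argues that $D_{\mathrm{KL}}$ is convex in $g_\beta(x_s)$ and $g_\beta$ is smooth in $\beta$, which does \emph{not} imply convexity in $\beta$ for a transformer side-network, so your explicit framing of $\mu>0$ as a local curvature assumption along the trajectory is the honest version of the same step; neither you nor the paper gives a clean derivation of the extra $\tfrac{1}{2\mu}$ prefactor (the paper simply writes it in the final line without comment), so your attempt to recover it via an additional PL application is no worse than what appears in the appendix.
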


Theorem~\ref{theorem:surrogate} establishes a theoretical guarantee that a side-tuned surrogate can achieve performance comparable to that of a fully trained surrogate. The detailed proof is provided in Appendix~\ref{app:proofs}.

Figure~\ref{fig:pipeline}(b) shows the pipeline of obtaining the surrogate in stage 1. An intuitive performance comparison of prediction models is presented in Figure~\ref{fig:training_results}(a). \mymethod{}'s surrogate surpasses the original classifier in terms of prediction accuracy and matches the performance of \citep{vitshap} when handling partial information, but with only $3\%$ trainable parameters. Additionally, \mymethod{}'s surrogate exhibits more robust predictions as the number of masked inputs increases. A detailed comparison of the training and memory costs on different models is provided in Table~\ref{tab:training_efficiency}.

\subsubsection{Obtaining the Explainer}
For the explainer model, \mymethod{} uses a similar LST feature backbone as in the surrogate model $g$, consisting of $N$ MSA blocks for feature extraction. Let $\phi^{(i)}$ represent the $i$-th MSA block of the explainer branch $\phi$. In addition to the lightweight backbone blocks in the side branch, we add $M$ extra FC layers as the explanation head. Together, the side network $\phi$ generates explanations based on the backbone features from the main branch. Let $z_1^{\text{main}}$ denote the output for input $x$ from the first MSA layer $f^{(i)}$ of the main branch $f$, \emph{i.e.}, $z_1^{\text{main}}=f^{(1)}(x)$. The forward process of the explainer is:
\begin{equation}
\begin{aligned}
z_i^{\text{main}} = f^{(i)}(z_{i-1}^{\text{main}}),\quad z_i^{\text{exp}} = \phi^{(i)}(\text{FC}^{(i)}(z_{i}^{\text{main}})) \quad \forall i\in \{1,\ldots,N\}, \\
{\phi}^{\text{exp}}(x) =  \text{FC}_{\text{head}}^{(M)}\left(\text{FC}_{\text{head}}^{(M-1)}\left(\ldots\left(\text{FC}_{\text{head}}^{(1)}\left(z_N^{\text{exp}}\right)\right)\right)\right),
\end{aligned}
\end{equation}
where the main branch $f$ remains uncontaminated. We provide a theoretical guarantee for the convergence of the trained side branch $\phi$ as follows:
% \begin{lemma}[Proof in Appendix~\ref{app:proofs}]
% \label{lemma:explainer}
%     For a single input-output pair $(x,y)$, the expected loss under~Eq.(\ref{eq:explainer}) for the prediction $\phi_{\theta}(x, y)$ is $\mu$-strongly convex with $\mu = H_{d - 1}^{-1}$, where $H_{d - 1}$ is the $(d - 1)$-th harmonic number.
% \end{lemma} 
\begin{theorem}[Proof in Appendix~\ref{app:proofs}]
\label{theorem:explainer}
Let $\phi_v(x|y)$ denote the exact Shapley value for input-output pair $(x,y)$ in game $v$. The expected regression loss $\mathcal{L}_{\text{exp}}(\theta)$ upper bounds the Shapley value estimation error as follows,
\begin{equation}
    \E_{p(x, y)} \Big[ \big|\big|\phi_{\theta}(x, y) - \phi_v(x|y))\big|\big|_2 \Big]
    \leq \sqrt{2 H_{d - 1} \Big( \mathcal{L}_{\text{exp}}(\theta) - \mathcal{L}^\star_{\text{exp}} \Big)},
\end{equation}
where $\mathcal{L}^\star_{\text{exp}}$ represents the optimal loss achieved by the exact Shapley values, and $H_{d-1}$ is the $(d-1)$-th harmonic number.
\end{theorem}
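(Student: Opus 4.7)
The plan is to reduce the theorem to a quadratic-form identity for the excess loss, a subspace eigenvalue bound for the resulting Hessian, and a single application of Jensen's inequality. The key observation is that the Shapley-kernel loss $\mathcal{L}_{\mathrm{exp}}$ is, at every fixed $(x,y)$, a constrained convex quadratic in $\phi$ whose constrained minimizer is exactly the Shapley value by the Charnes et al.\ (1988) characterization reused in KernelSHAP and FastSHAP.

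First I would fix $(x,y)$, set $u(s) := g_\beta(x_s\mid y) - g_\beta(x_{\vzero}\mid y)$, and write the per-sample loss as
\[
\ell(\phi;x,y) \;=\; \E_{s\sim q}\bigl[(u(s) - s^\top\phi)^2\bigr],
\]
whose Hessian is $2A$ with $A := \E_{s\sim q}[ss^\top]$. Let $w := \phi_\theta(x,y) - \phi_v(x\mid y)$. Since both $\phi_\theta$ and $\phi_v$ satisfy the efficiency constraint, $\vone^\top w = 0$; and by KKT stationarity at the constrained minimizer $\phi_v$, the gradient $\nabla_\phi\ell(\phi_v;x,y)$ is parallel to $\vone$. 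The inner product of $w$ with this gradient therefore vanishes, and the per-sample excess reduces to the clean quadratic identity $\ell(\phi_\theta;x,y) - \ell(\phi_v;x,y) = w^\top A w$.

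Next I would compute the restriction of $A$ to the efficiency hyperplane. By exchangeability of $q(s)$ under permutations of coordinates, $A = aI + b(J-I)$ with $a = \Pr_q[s_i=1]$ and $b = \Pr_q[s_i=s_j=1]$ for $i\neq j$. The normalization identity $\sum_{k=1}^{d-1}\tfrac{1}{k(d-k)} = \tfrac{2}{d}H_{d-1}$, which follows from the partial fraction $\tfrac{1}{k(d-k)} = \tfrac{1}{d}\bigl(\tfrac{1}{k}+\tfrac{1}{d-k}\bigr)$, together with short computations of these two marginals, yields $a - b = \tfrac{1}{2H_{d-1}}$. Since $(J-I)w = -w$ whenever $\vone^\top w = 0$, we get $w^\top A w = \tfrac{1}{2H_{d-1}}\|w\|_2^2$. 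Taking expectations over $p(x,y)$ turns the per-sample identity into $\E[\|w\|_2^2] \le 2H_{d-1}\bigl(\mathcal{L}_{\mathrm{exp}}(\theta) - \mathcal{L}_{\mathrm{exp}}^\star\bigr)$, and applying Jensen's inequality to the concave map $t\mapsto\sqrt{t}$ delivers $\E[\|w\|_2] \le \sqrt{\E[\|w\|_2^2]}$, which is precisely the claim.

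The main obstacle is the subspace eigenvalue step: verifying that the Hessian of the Shapley-kernel loss, restricted to the efficiency hyperplane, has minimum eigenvalue exactly $1/(2H_{d-1})$. The kernel concentrates mass on extreme coalition sizes, so the binomial sums defining $a$ and $b$ must be handled with care, and it is precisely the partial-fraction trick that produces the harmonic number in the final constant. Once that identity is established, the rest — the quadratic-form reduction via Charnes/KKT and the square-root Jensen step — is routine.
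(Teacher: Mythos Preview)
Your proposal is correct and follows essentially the same route as the paper: both compute the Hessian $2\,\E_q[ss^\top]$, extract the key eigenvalue $A_{ii}-A_{ij}=1/(2H_{d-1})$ via the partial-fraction/harmonic-number identity, invoke KKT stationarity at the constrained optimum to kill the first-order term, take expectations, and finish with Jensen. The only cosmetic difference is that you obtain an exact per-sample identity by restricting the quadratic form to the efficiency hyperplane $\vone^\perp$, whereas the paper phrases the same step through $\mu$-strong convexity of the Lagrangian and the global minimum eigenvalue of $A$; since for this permutation-symmetric $A$ the restricted eigenvalue coincides with the global minimum, the two arguments are equivalent.
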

Theorem~\ref{theorem:explainer} provides a theoretical guarantee that a side-tuned explainer can achieve performance on par with a fully trained explainer. The complete proof is presented in Appendix~\ref{app:proofs}.

Figure~\ref{fig:pipeline}(b) shows the pipeline of obtaining the explainer in stage 2. We evaluated the explanation quality of \mymethod{} against various baselines, as shown in Figure~\ref{fig:training_results}(b). \mymethod{} achieved the highest insertion and lowest deletion scores among $12$ explanation methods, demonstrating superior explanation quality. Compared to~\citep{vitshap}, we reduced the trainable parameters for explainers by $98\%$ while maintaining comparable or even superior interpretability. Table~\ref{tab:training_efficiency} provides  detailed comparisons of training and memory efficiency for  different models.

\begin{table}[tb!]
\vspace{-15pt}
\caption{Inference efficiency comparison of various models. We assessed the computational cost (FLOPs), total parameters, and inference time for different methods. For the baseline method~\citep{vitshap}, we calculated these values separately for the classifier and explainer, and then combined them. In contrast, for \mymethod{}, we computed these values directly from our self-interpretable models, who can generate both predictions and explanations simultaneously.}
\vspace{-5pt}
\label{tab:inference_efficiency}
\centering
\resizebox{\textwidth}{!}{
\begin{tabular}{@{}cc|cccc|c|c|c@{}}
\toprule
\multicolumn{2}{c|}{Dataset} &
  \multicolumn{4}{c|}{ImageNette} &
  MURA &
  Pet &
  Yelp \\ \midrule
\multicolumn{2}{c|}{Model} &
  ViT-T &
  ViT-S &
  ViT-B &
  ViT-L &
  ViT-B &
  ViT-B &
  BERT-B \\ \midrule
\multicolumn{1}{c|}{\multirow{3}{*}{\begin{tabular}[c]{@{}c@{}} Classifier + \\ Explainer \\ (Covert et al.)\end{tabular}}} &
  FLOPs (G)&
  4.78&
  18.86&
  74.90&
  251.96&
  74.88&
  74.93&
  213.51\\
\multicolumn{1}{c|}{} &
  Time (ms)&
  19.7&
  39.0&
  94.9&
  310.3&
  100.3&
  100.2&
  166.90\\
\multicolumn{1}{c|}{} &
  \#Params (M)&
  12.25&
  48.08&
  190.53&
  640.23&
  190.49&
  190.63&
  237.27\\ \midrule
\multicolumn{1}{c|}{\multirow{3}{*}{\begin{tabular}[c]{@{}c@{}} Self-Interpretable \\ Model \\ (\mymethod{})\end{tabular}}} &
  FLOPs (G)&
  2.22 {\color[HTML]{18A6C2} (-54\%)}&
  8.73 {\color[HTML]{18A6C2}(-54\%)}&
  34.67 {\color[HTML]{18A6C2}(-54\%)}&
  122.60 {\color[HTML]{18A6C2}(-51\%)}&
  36.81 {\color[HTML]{18A6C2}(-51\%)}&
  34.67 {\color[HTML]{18A6C2}(-54\%)}&
  116.66 {\color[HTML]{18A6C2}(-45\%)}\\
\multicolumn{1}{c|}{} &
  Time (ms)&
  15.4 {\color[HTML]{18A6C2} (-22\%)}&
  23.0 {\color[HTML]{18A6C2} (-41\%)}&
  52.9 {\color[HTML]{18A6C2} (-44\%)}&
  179.1 {\color[HTML]{18A6C2} (-42\%)}&
  56.7 {\color[HTML]{18A6C2} (-43\%)}&
  57.0 {\color[HTML]{18A6C2} (-43\%)}&
  118.55 {\color[HTML]{18A6C2} (-29\%)}\\
\multicolumn{1}{c|}{} &
  \#Params (M)&
  5.68 {\color[HTML]{18A6C2} (-54\%)}&
  22.28 {\color[HTML]{18A6C2} (-54\%)}&
  88.22 {\color[HTML]{18A6C2} (-54\%)}&
  311.55 {\color[HTML]{18A6C2} (-51\%)}&
  93.65 {\color[HTML]{18A6C2} (-51\%)}&
  88.25 {\color[HTML]{18A6C2} (-54\%)}&
  126.63 {\color[HTML]{18A6C2} (-47\%)}\\ \bottomrule
\end{tabular}
}
\vspace{-10pt}
\end{table}

\subsubsection{Generating Explanation for Black-Box Models}

After obtaining the explainer, we now detail the explanation procedure. In most post-hoc explanation methods~\citep{GradCAM,GradCAM++,LRP,vitshap}, predictions and explanations must be computed separately for a single input. For instance, as illustrated in Figure~\ref{fig:pipeline}(a), two separate inferences are required to explain a single prediction. In contrast, as shown in Figure~\ref{fig:pipeline}(b), \mymethod{} generates both predictions and explanations simultaneously, needing only one inference. To evaluate this efficiency, we measured inference time, computational cost (FLOPs), and total parameters required to generate predictions and explanations for different methods. The comparison of inference efficiency is highlighted in Table~\ref{tab:inference_efficiency}.

\subsection{Difference between \mymethod{} and previous PETL Methods}
\label{sec:discussion}

In this section, we provide an empirical analysis of why \mymethod{} outperforms previous PETL approaches. We highlight that \mymethod{} is not a simple extension of classical PETL, where full fine-tuning is typically the optimal baseline. Additionally, \mymethod{} exploits the intrinsic correlation between the prediction task and its explanation, enabling black-box models to become self-interpretable without sacrificing prediction accuracy. We elaborate on these two points below.

\noindent \textbf{Full fine-tuning poses challenges for achieving self-interpretability and is not the optimal baseline for \mymethod{}.} For classical PETL, the goal is often to match the performance of fully fine-tuned models on standard tasks~\citep{Adapter, vitadapter, LoRA, LoSA}. However, in XAI scenarios, the challenge shifts: it becomes difficult, if not impossible, to train a model that balances both prediction accuracy and explanation quality~\citep{arrieta2020explainable,gunning2019xai,dovsilovic2018explainable}. In fact, fine-tuning models to adapt interpretability without forgetting pre-trained knowledge can be difficult~\citep{LWF}. Additionally, full fine-tuning the original model also puts us in the \textit{Theseus's Paradox}: we won't be sure if we are explaining the very same model anymore. Even if full fine-tuning were practical, it would contradict the goal of interpreting the pre-trained model. In contrast, \mymethod{} pipeline addresses this issue by freezing the primary model and training only a side network to generate explanations, offering an efficient solution that enables self-interpretability without degrading prediction performance. 

\begin{figure}[htbp]
\centering
% \vspace{-8pt}
\begin{minipage}{.34\linewidth}
    \centering
    \includegraphics[width=0.99\linewidth]{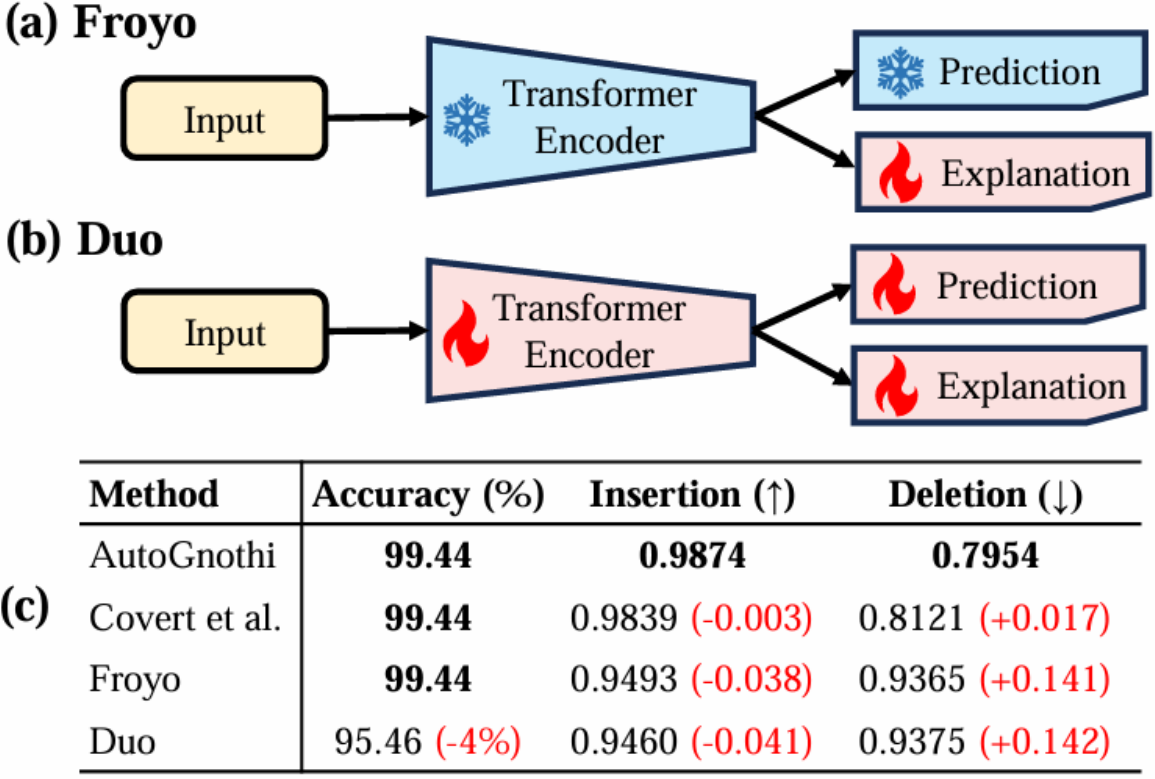}
    \vspace{-5pt}
    \caption{Other pipelines to achieve the self-interpretability through the full fine-tuning. (a) Freeze the transformer encoder and prediction head, learning only the explanation head. (b) Simultaneously learn the transformer encoder, and both task heads. (c) Comparison of classification and explanation performance between different pipelines for ViT-base.
    }
    \label{fig:fully_pipeline}
\end{minipage} %\par
\hspace{5pt}
\medskip
\begin{minipage}{.63\linewidth}
    \centering
    \includegraphics[width=\linewidth]{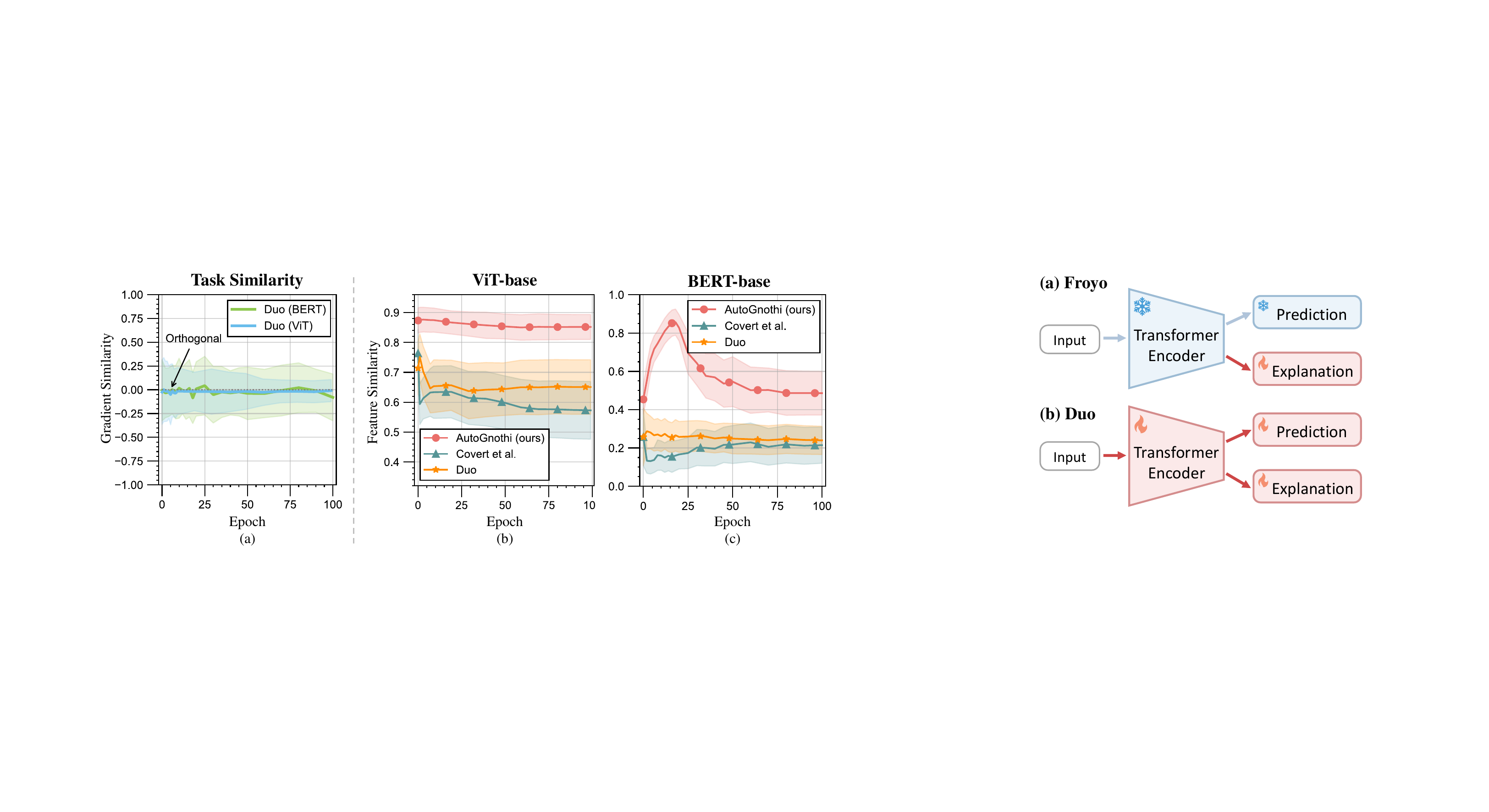}
    \vspace{-7pt}
\caption{Explanation of why the Duo pipeline underperforms compared to \mymethod{}. (a) The full fine-tuning strategy employed by Duo is not the optimal baseline for \mymethod{} due to significant gradient conflicts between the prediction and explanation tasks. This conflict results in degraded performance, raising concerns about whether the explanation truly pertains to the same model. Note that gradient similarity can only be measured for the Duo pipeline, as other methods freeze the prediction backbones. (b) In contrast, \mymethod{} exhibits a stronger correlation between features of the prediction and explanation tasks. We measured the feature similarity with CKA.}
\label{fig:discussion}
\end{minipage}
\vspace{-7pt}
\end{figure}

Building on prior work that highlights the challenges of achieving self-interpretability through full fine-tuning, we conducted experiments to further explore this issue. As depicted in Figure~\ref{fig:fully_pipeline}(a)(b), we introduce two additional pipelines, \textit{Froyo} and \textit{Duo}. \textit{Froyo} adds an explanation head while keeping the transformer encoder and prediction head frozen to preserve prediction accuracy. In contrast, \textit{Duo} jointly learns both the prediction task and its explanation. Both pipelines use the same encoder, prediction head, and explanation head architectures as described in~\citep{vitshap}.

We performed experiments using ViT-base model trained on the ImageNette dataset. Our findings show that the \textit{Froyo} pipeline underperforms due to the limited trainable parameters in the explanation head, resulting in degraded explanation quality. As illustrated in Figure~\ref{fig:fully_pipeline}(c), this led to a reduction in insertion by $0.038$ and an increase in deletion by $0.141$, despite no impact on prediction accuracy.

For the Duo pipeline, we observed a $4.0\%$ decline in prediction accuracy on the ViT-base model, accompanied by a reduction in insertion by $0.041$ and an increase in deletion by $0.142$. Further empirical evidence, as depicted in Figure~\ref{fig:discussion}(a), highlights conflicting gradients between the prediction and explanation tasks during training. Additionally, as previously discussed, the \textit{Theseus's Paradox} arises when changes in predictions result in evolving explanations, thereby challenging the consistency and identity of the original model.

\noindent \textbf{\mymethod{} uncovers the intrinsic correlation between predictions and explanations.} While the \mymethod{} pipeline enables self-interpretation with superior efficiency, the underlying mechanisms connecting prediction and explanation remain underexplored. We propose that \mymethod{} leverages the intrinsic relationship between the backbone features used for both tasks. This correlation is illustrated in Figure~\ref{fig:discussion}(b), where we evaluated the Central Kernel Alignment (CKA)~\citep{CKA} between the backbone features of the original pre-trained model and those of various explainers. Our results show that \mymethod{} exhibits higher feature similarity between prediction and explanation tasks on ViT-base and BERT-base models, supporting our hypothesis.

% 其它于PETL不同的地方
% 我们的方法主要目标是减少推理时候的成本，而不是训练时候
% 我们的方法使用的核心在于利用side layer去在不影响模型预测的前提下学习新知识，在推理时候

% \clearpage

% Experiments
\section{Experiments} \label{sec:experiments}
% We now demonstrate the effectiveness of our \mymethod{} pipeline on both vision and language tasks. Section~\ref{sec:settings} introduces the experimental settings for our results. The efficiency results of \mymethod{} is demonstrated in Section~\ref{sec:efficiency}. Further quantitative and qualitative results on the interpretability are provided in~\ref{sec:explanation_quality}. We also provide more interpretability results in Appendix~\ref{app:results} and Appendix~\ref{app:visualizations}.

\subsection{Experimental Settings}
\label{sec:settings}
\noindent \textbf{Datasets and Black-box Models.} For image classification, we used the ImageNette~\citep{ImageNette}, Oxford IIIT-Pets~\citep{Pet}, and MURA~\citep{MURA} datasets, following~\citep{vitshap}. For sentiment analysis, we utilized the Yelp Review Polarity dataset~\citep{yelp}. In terms of black-box models, we employed the widely used ViT models~\citep{ViT} for vision tasks, including ViT-tiny, ViT-small, ViT-base, and ViT-large. For language tasks, we used the BERT-base model~\citep{BERT}.

\begin{wraptable}{r}{0.48\textwidth}
\vspace{-15pt}
\caption{Quality metrics (insertion and deletion) for target class explanations of ViT-base across baseline methods and \mymethod{} on the ImageNette dataset. More results for other datasets and models are provided in Appendix~\ref{app:results}.}
    \centering
\vspace{-8pt}
  \resizebox{0.48\textwidth}{!}{
\begin{tabular}{lcc}
\toprule
\textbf{Method}    & \textbf{Insertion (↑)} & \textbf{Deletion (↓)} \\ \midrule
Random             & 0.9578{\scriptsize	$\pm$0.0790}            & 0.9584{\scriptsize	$\pm$0.0764}            \\ \midrule
Attention last     & 0.9633{\scriptsize	$\pm$0.0659}            & 0.8524{\scriptsize	$\pm$0.1748}            \\ 
Attention rollout  & 0.9408{\scriptsize	$\pm$0.0834}            & 0.9168{\scriptsize	$\pm$0.1277}            \\
GradCAM (Attn)     & 0.9447{\scriptsize	$\pm$0.0936}            & 0.9562{\scriptsize	$\pm$0.0916}            \\
GradCAM (LN)       & 0.9343{\scriptsize	$\pm$0.0829}            & 0.9426{\scriptsize	$\pm$0.1307}            \\
Vanilla (Pixel)    & 0.9487{\scriptsize	$\pm$0.0688}            & 0.8945{\scriptsize	$\pm$0.1513}            \\
Vanilla (Embed)    & 0.9563{\scriptsize	$\pm$0.0643}            & 0.8618{\scriptsize	$\pm$0.1754}            \\
IntGrad (Pixel)    & 0.9670{\scriptsize	$\pm$0.0575}             & 0.9408{\scriptsize	$\pm$0.1141}            \\
IntGrad (Embed)    & 0.9670{\scriptsize	$\pm$0.0575}             & 0.9408{\scriptsize	$\pm$0.1141}            \\
SmoothGrad (Pixel) & 0.9591{\scriptsize	$\pm$0.0760}            & 0.8459{\scriptsize	$\pm$0.1788}            \\
SmoothGrad (Embed) & 0.9529{\scriptsize	$\pm$0.0931}            & 0.9561{\scriptsize	$\pm$0.0764}            \\
VarGrad (Pixel)    & 0.9616{\scriptsize	$\pm$0.0725}            & 0.8600{\scriptsize	$\pm$0.1692}              \\
VarGrad (Embed)    & 0.9552{\scriptsize	$\pm$0.0901}            & 0.9568{\scriptsize	$\pm$0.0756}            \\
LRP                & 0.9677{\scriptsize	$\pm$0.0623}            & 0.8393{\scriptsize	$\pm$0.1866}            \\
Leave-one-out      & 0.9696{\scriptsize	$\pm$0.0353}            & 0.9334{\scriptsize	$\pm$0.1493}            \\
RISE               & 0.9772{\scriptsize	$\pm$0.0225}            & 0.8959{\scriptsize	$\pm$0.1962}            \\
Covert et al.& 0.9839{\scriptsize	$\pm$0.0375}            & 0.8121{\scriptsize	$\pm$0.1768}            \\ \midrule
\rowcolor[HTML]{EFEFEF} 
\mymethod{} (Ours) & \textbf{0.9874{\scriptsize	$\pm$0.0265}}            & \textbf{0.7954{\scriptsize	$\pm$0.2294}}            \\ \bottomrule
\end{tabular}
}
\label{tab:explanation_quality}
\vspace{-20pt}
\end{wraptable}

\noindent \textbf{Implementation Details.} For surrogates and explainers, \mymethod{} incorporates the same number of MSA blocks as the black-box model being explained in the side network and utilizes a reduction factor of $r=8$ for the lightweight side branch on both the ImageNette and Oxford IIIT-Pets datasets, and $r=4$ for MURA and Yelp Review Polarity.  Surrogates use the same task head as the black-box classifiers with one additional FC layer as classification head for handling partial information. Explainers utilize three additional FC layers as the explanation head after the side network backbone. For attention masking, we employed a causal attention masking strategy, setting attention values to a large negative number before applying the softmax operation~\citep{GPT3}. More detailed training settings are provided in Appendix~\ref{app:training}.

\noindent \textbf{Evaluation Metrics for Explanations.} We used the widely adopted insertion and deletion metrics~\citep{rise} to evaluate explanation quality. These metrics are computed by progressively inserting or deleting features based on their importance and observing the impact on the model's predictions. The corresponding surrogate model trained to handle partial inputs is used for this process to generate prediction for masked inputs. We calculated the area under the curve (AUC) for the predictions and average  results for randomly selected $1,000$ samples on all datasets.

\noindent \textbf{Baseline Methods.} We considered $12$ representative explanation methods for comparison. For attention-based methods, we utilized attention rollout and attention last~\citep{AttnRollout}. For gradient-based methods, we used Vanilla Gradients~\citep{vanilla}, IntGrad~\citep{intgrad}, SmoothGrad~\citep{smoothgrad}, VarGrad~\citep{vargrad}, LRP~\citep{LRP}, and GradCAM~\citep{GradCAM}. For removal-based methods, we employed leave-one-out~\citep{loo} and RISE~\citep{rise}. For Shapley value-based methods, we utilized KernelSHAP~\citep{kernelshap} and ViT Shapley~\citep{vitshap} as baselines.

\subsection{Evaluating Training, Memory and Inference Efficiency}
\label{sec:efficiency}

To evaluate the training and memory efficiency of \mymethod{}, we compared it with various baselines in terms of trainable parameters and memory usage. Table~\ref{tab:training_efficiency} provides a summary of the memory costs and trainable parameters for both surrogate and explainer models across different methods. During training, \mymethod{} achieves a significant reduction of over $87\%$ in trainable parameters and more than $65\%$ in GPU memory usage for both surrogates and explainers on both vision and language datasets, while maintaining competitive performance in both prediction accuracy and explanation quality. The memory cost is evaluated with training batch size $=1$.

Next, we evaluate the inference efficiency of our self-interpretable models by comparing the computational cost (FLOPs), inference time, and the total number of parameters required for generating both predictions and explanations. As shown in Table~\ref{tab:inference_efficiency}, \mymethod{} significantly reduces inference time and FLOPs compared to baseline models that require separate inferences for predictions and explanations. Specifically, when both the predictions and the explanations are required, \mymethod{} is capable of reducing at least $45\%$ in FLOPs, $22\%$ in inference time (up to $44\%$), and $47\%$ in total parameters end-to-end across all datasets and tasks.

\subsection{Evaluating Explanation Quality}
\label{sec:explanation_quality}

\noindent \textbf{Quantitative Results.} To evaluate the quality of explanations generated by \mymethod{}, we compared it against $12$ state-of-the-art baseline methods using the insertion and deletion metrics. Table~\ref{tab:explanation_quality} presents results from a ViT-base model trained on the ImageNette dataset, where \mymethod{} consistently achieves the best insertion and deletion scores for target class explanations across various datasets. Further detailed results for other models and tasks are provided in Appendix~\ref{app:results}. For vision task, please refer to Tables~\ref{tab:imagenette_vit_tiny}, \ref{tab:imagenette_vit_small}, and~\ref{tab:imagenette_vit_large} for the results of ViT-tiny, ViT-small, and ViT-large on ImageNette, respectively. Results for ViT-base on Oxford-IIIT Pets are provided in Table~\ref{tab:pet_vit_base}, and results for ViT-base on Mura are shown in Table~\ref{tab:mura_vit_base}. For language task, we also provide results of BERT-base on Yelp Reivew Polarity in Table~\ref{tab:yelp_bert}.

\noindent \textbf{Qualitative Results.} We also provide visualization results for ViTs on different datasets, as shown in Figure~\ref{fig:visualization_vit}. It may be observed that RISE happened to fail to provide human-interpretable or intuitive results, which amongst all others \mymethod{} offers more accurate explanations with a clearer focus on the subject and diluted colours for irrelevant classes. Additional visualization results for ViT-base on more datasets are provided in Appendix~\ref{app:visualizations}, shown in Figures~\ref{fig:app_imagenette_1},~\ref{fig:app_imagenette_2},~\ref{fig:app_pet_1},~\ref{fig:app_pet_2},~\ref{fig:app_mura_1}, and~\ref{fig:app_mura_2}.

\begin{figure}
    \centering
    \vspace{-15pt}
\includegraphics[width=\linewidth]{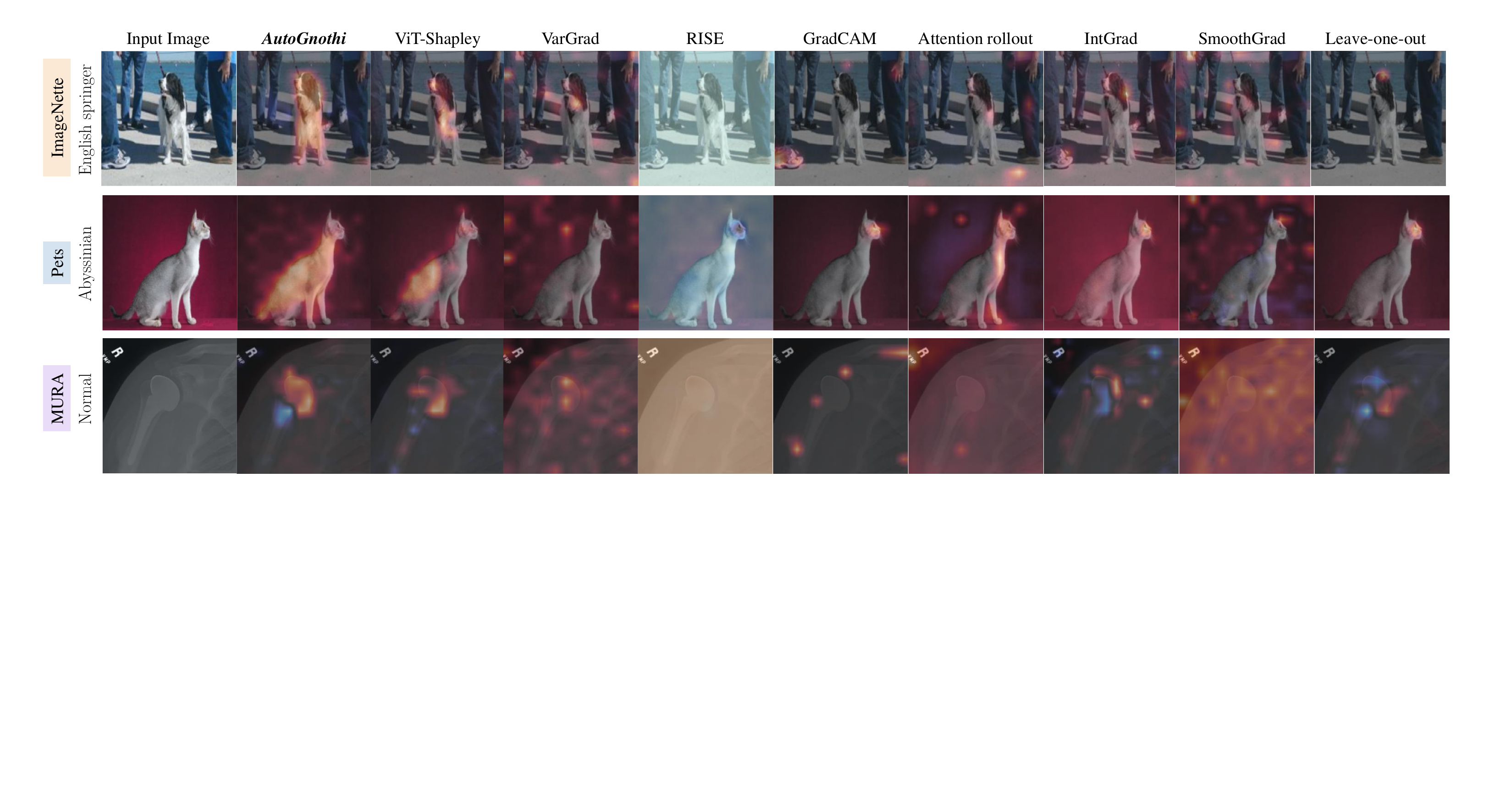}
    \vspace{-20pt}
    \caption{Visualization of ViT explanations on the ImageNette, Oxford-IIIT Pets, and MURA datasets. \mymethod{} qualitatively outperforms other baseline approaches.}
    \label{fig:visualization_vit}
    \vspace{-5pt}
\end{figure}

% Conclusion
\section{Conclusion} \label{sec:conclusion}
This paper introduces  \mymethod{} to bridge the gap between self-interpretable models and post-hoc explanation methods in Explainable AI. Inspired by parameter-efficient transfer-learning, \mymethod{} incorporates a lightweight side network that allows black-box models to generate faithful Shapley value explanations without affecting the original predictions. Notably, \mymethod{} outperforms directly fine-tuning the all the parameters in the model for explanation by a clear margin. This approach empowers black-box models with self-interpretability, which is superior to standard post-hoc explanations that require generating predictions and explanations in two separate, heavy inferences. Experiments on ViT and BERT demonstrate that \mymethod{} achieves superior efficiency on computation, storage and memory
in both training and inference periods.

% References
\bibliographystyle{iclr2025_conference}
\clearpage
\bibliography{main}

% \clearpage
% \input{sections/checklist}

\clearpage
\appendix
% \tableofcontents
% \clearpage

\setcounter{theorem}{0}
\setcounter{lemma}{0}
\setcounter{equation}{0}

\section{Further Experimental Details} \label{app:training}

\subsection{Environment} \label{app:hardware}
Our experiments were conducted on one 128-core AMD EPYC 9754 CPU with one NVIDIA GeForce RTX 4090 GPU with 24 GB VRAM. No multi-card training or inference was involved. We implemented the training and inference pipelines for image classification tasks under the PyTorch Lightning framework, and for the sentiment analysis task with  just PyTorch. For evaluations on baseline methods we leverage the SHAP library \citep{kernelshap}, with minor modifications applied to bridge data format differences between Numpy and PyTorch.

\subsection{FLOPs and Memory} \label{app:FLOPs}
We used the \texttt{profile} function from the \texttt{thop} library to evaluate the FLOPs for each model during the inference stage. Memory consumption was manually calculated during the training stage based on the model parameters, activations, and intermediate results. Our memory estimations were made under the assumption that the memory was evaluated under a batch size of $1$ and uses 32-bit floating point precision (torch.float32).

\subsection{Classifier} \label{app:cls}
The training of all tasks is split into 3 stages. In the first stage parameters of the classifier are inherited from the original base model verbatim, with the exception of \mymethod{} adding additional parameters for the new side branch, initialized with Kaiming initialization.

We fine-tune this classifier model on the exact same dataset with the AdamW optimizer, using a learning rate of $10^{-5}$ for $25$ epochs, and retain the best checkpoint rated by minimal validation loss. Classification loss is minimum square error with respective to the predicted classes and the ground-truth labels. For \mymethod{} the classes come from the side branch, while all remaining models use the classes from the main branch. We train and evaluate these methods on 1 Nvidia RTX 4090, and use a batch size of $32$ samples provided that it fits inside the available GPU memory.

We freeze parameters in all stages likewise. As is described in \ref{sec:discussion}, \mymethod{} only trains the side branch and all parameters from the original model are frozen. In ViT-shapley (Covert et al.) and Duo pipelines, all parameters are trained, whilst the Froyo pipeline only trains the classification head.

\subsection{Surrogate} \label{app:surr}

Surrogate models have the same model architecture as the classifiers, with a different recipe. We load all parameters from the classifier without any changes or additions, further fine-tune the model for $50$ epochs, and retain the best checkpoint with minimal validation loss.

Unlike the classifier, the surrogate model focuses on mimicking the classifier model's behavior under a masked context. Consider the logits $p(y | x_\vzero)$ from the classifier model, where $x$ is the input and $y$ is the corresponding class label. The surrogate model aims at closing in its masked logits distribution, $p(y | x_s)$, with the original distribution:

\begin{equation}
    \mathcal{L}_{\text{surr}}(\beta) = \mathop{\mathbb{E}}_{\substack{ x \sim p(x),  s \sim \mathcal{U}(0,d)}}\left[ D_{\text{KL}}\left( f(x) \| g_\beta(x_s) \right) \right],
\end{equation}

The mask is selected on an equi-categorical basis. We first pick an integer $n_s = s \cdot \vone^\top$ at uniform distribution, denoting the number of tokens that shall be masked. $n_s$ mutually exclusive indices are then randomly chosen from the input at uniform distribution. To avoid inhibiting model capabilities, special tokens like the implicit class token in ViT or the \texttt{[CLS]} token applied by the BERT tokenizer are never masked.

In order to selectively hide or mask inputs from the model, we apply causal attention masks for both the image models and text models in our experiments. However, it's worth noting that while they may confuse the transformer's attention mechanisms, certain other methods are also capable of concealing these input tokens, primarily zeroing or assigning random values to the said pixels in image models, or assigning \texttt{[PAD]} and \texttt{[MASK]} to the selected tokens. We follow prior work \citep{vitshap} and adhere to causal attention masks.

Recall that the transformer self-attention mechanism used in ViT \citep{ViT}, BERT \citep{BERT}. Given an attention input $t \in \mathbb{R}^{d \times h}$ and self-attention parameters $W_\textrm{qkv} \in \mathbb{R}^{h \times 3h'}$, whereas $d$ is the number of tokens and $h$ is the attention's hidden size, and $h'$ be the size of each attention head, the output of the self-attention $\textrm{SA}(t)$ for a single head is computed as follows:

\begin{flalign}
    \left[ Q, K, V \right] &= u \cdot W_\textrm{qkv} \\
    A &= \textrm{softmax}\left( \frac{Q \cdot K^\top} {\sqrt{h'}} \right) \\
    \textrm{SA}(t) &= A \cdot V
\end{flalign}

Transformers in practice use a multiple $k$ attention heads, holding that $k \cdot h' = h$. An attention projection matrix $P_\textrm{msa} \in \mathbb{R}^{h \times h}$ is used to combine the outputs of all attention heads. Denoting the $i$-th self-attention head's output as $SA_i(t)$, the final output of the attention layer $\textrm{MSA}(t)$ is thus computed:

\begin{equation}
    \textrm{MSA}(t) = \left[ \textrm{SA}_1(t), \textrm{SA}_2(t), \ldots, \textrm{SA}_n(t) \right] \cdot P_\textrm{msa}
\end{equation}

We notice that the attention mechanism is entirely unrelated to the number of tokens, and can operate in the absence of certain input tokens. Let an indicator vector $s \in \{0, 1\}^d$ correspond to a subset of the input tokens (applying equally to images and text tokens), we calculate the masked self-attention over $t$ and $s$ as follows:

\begin{flalign}
    \left[ Q, K, V \right] &= u \cdot W_{Q,K,V} \\
    A &= \textrm{softmax} \left( \frac{Q \cdot K^\top - (1 - s) \cdot \infty} {\sqrt{h'}} \right) \\
    \textrm{SA}(t, s) &= A \cdot V
\end{flalign}

We apply masking to the multi-head attention layers likewise, such that:

\begin{equation}
    \textrm{MSA}(t, s) = \left[ \textrm{SA}_1(t, s), \textrm{SA}_2(t, s), \ldots, \textrm{SA}_n(t, s) \right] \cdot P_\textrm{msa}
\end{equation}

This mechanism is widely implemented for attention models in commonplace libraries such as HuggingFace's Transformers, named by the argument \texttt{attention\_mask} in the input tensor.

\subsection{Explainer} \label{app:exp}

We load explainer model parameters from surrogate model checkpoints, such that all are copied from the surrogate model to the explainer model, except for the last classification head, which is replaced with an explainer head. The explainer head contains 3 MLP layers and a final linear layer, with GeLU \citep{hendrycks2017bridging} activations from in between. For \mymethod{} only the classification head on the side branch is replaced. We train the explainer model for $100$ epochs with the AdamW optimizer, using a learning rate of $10^{-5}$, and keep the best checkpoint.

In our implementation, we took 2 input images in each mini-batch and generated $16$ random masks for each image, resulting in a parallelism of 32 instances per batch. A slight change is applied to the masking algorithm in the explainer model from the surrogate model, in order to reduce variance during gradient descent. Specifically, in addition to generating masks uniform, we follow prior work \citep{vitshap} and use the paired sampling trick \citep{Improvingkernelshap}, pairing each subset $s$ with its complement $\vone - s$. This algorithm equally applies to both image and text classification models.

The explainer model is trained to approximate the Shapley value $\phi_\theta(x, y)$. Let $g_\beta(x_s|y)$ be the surrogate values respective to the input $x$ and the class $y$, masked by the indicator vector $s$, and $g_\beta(x_\vzero|y)$ be the surrogate values without masking. Following \citep{vitshap}, we minimize the following loss function:

\begin{align}
\label{eq:appendix_exp}
    \mathcal{L}_{exp}(\theta) = & \mathop{\mathbb{E}}_{\substack{(x, y) \sim p(x, y), s \sim q(s)}}\left[ \left( g_\beta(x_s| y) - g_\beta(x_{\vzero}| y) - s^\top \phi_\theta(x, y) \right)^2 \right] \\
    & \text{s.t.} \quad \vone^\top \phi_{\theta}(x, y) = g_\beta(x_{\vone}|y) - g_\beta(x_{\vzero}|y) \quad \forall (x, y), \tag{Efficiency}
\end{align}

Notice that the explainer model $\phi_\theta(x, y)$ is being trained under a mean squared error loss, and that $16$ random masks are generated for each image in the mini-batch. The explainer is hence optimized against a distribution of the Shapley values, so an accurate calculation of ground-truth Shapley values for each training sample is not even remotely necessary.

Also, the aforementioned efficiency constraint is necessary for the explainer model to output faithful and exact Shapley values. We leverage \textit{additive efficient normalization} from \citep{family_lsv_games} and use the same approach as prior work \citep{fastshap,vitshap} to enforce this constraint. The model is trained to make unconstrained predictions as is described in equation \ref{eq:appendix_exp}, which we then modify using the following transformation to have it constrained:

\begin{equation}
    \phi_\theta(x, y) \leftarrow \phi_\theta(x, y) + \frac{g_\beta(x_\vone| y) - g_\beta(x_\vzero| y) - \vone^T \cdot \phi_\theta(x, y)}{d}
\end{equation}

After training the explainer model, we merge the original classifier model and all relevant intermediate stages' models into one single, independent model to contain both the classification task and the explanation task to have them run concurrently. For the baseline method, ViT-shapley and its modified counterpart for NLP, no parameters overlap between the two tasks, thus inference must be done on both tasks resulting in a huge implied performance overhead. \mymethod{}, however, only requires a validation pass to ensure that the original classifier head is preserved verbatim in the final model. This is done by comparing the output of the original classifier and the final model on any arbitrary input.

\subsection{Datasets} \label{app:ds}

In this section we explain in more detail which datasets are selected and how they are processed for our experiments. Three datasets are used for the image classification task. The ImageNette dataset includes $9,469$ training samples and $3,925$ validation samples for $10$ classes. MURA (musculoskeletal radiographs) has $36,808$ training samples and $3,197$ validation samples for $2$ classes. The Oxford-IIIT Pets dataset contains $5,879$ training samples, $735$ validation samples and $735$ test samples in $37$ classes. For the text classification (sequence classification) task, we use the Yelp Polarity dataset, which originally contains $560,000$ training samples and $38,000$ test samples.

For each epoch, image classifiers (ViT) iterate through all available images in either of the train or test dataset. Specifically to during training, images are normalized by the mean value and standard deviation of each corresponding training dataset, before being down-sampled to $224 \times 224$ pixels. For text classifiers, each of the epoch is trained on exactly $2048$ training samples randomly chosen from the dataset, and validated on $256$ equally random test samples. This is primarily done to serve our needs in frequent checkpoints for more thorough data analysis such as on CKA or parameter gradients.

Due to the sheer cost from some metrics on certain tasks, we reduced the size of our test set during evaluation. We selected $1000$ test samples for datasets ImageNette, MURA and Yelp Review Polarity, and randomly selected $300$ samples for the Oxford-IIIT Pets dataset. For each dataset this subset stays the same between different explanation methods and different model sizes. We emphasize that these samples are deliberately independent from the training set to avoid potential bias from the results.

\section{Proofs of Theorems}
\label{app:proofs}

Here we provide detailed proof for Theorem~\ref{theorem:surrogate} and Theorem~\ref{theorem:explainer}, which provide theoretical guarantee for the performance of surrogates and explainers. Our proof follows \citep{simon2020projected} and \citep{vitshap}, which exhibit similar results for a single data point. 

\begin{lemma}
\label{lemma:surrogate}
For a single input $x$, the expected loss under Eq.~(\ref{eq:surrogate}) is $\mu$-strongly convex, where $\mu$ is the minimal eigenvalue of the Hessian of $\mathcal{L}_\textrm{surr}(\beta)$.
\end{lemma}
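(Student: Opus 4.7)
The plan is to reduce the strong-convexity claim to a second-order characterization via the Hessian. Since the outer expectation over $x \sim p(x)$ preserves convexity and can only increase the minimum eigenvalue pointwise in $\beta$, it suffices to establish the claim for a fixed $x$. Writing $\mathcal{L}^x(\beta) = \mathbb{E}_{s \sim \mathcal{U}(0,d)}[D_{\text{KL}}(f(x) \| g_\beta(x_s))]$, I would first note that $f(x)$ is independent of $\beta$, so up to an additive $\beta$-independent term $\mathcal{L}^x(\beta)$ equals the cross-entropy $-\mathbb{E}_s[\sum_y f(x)_y \log g_\beta(x_s)_y]$.

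Next, parameterizing the surrogate output as a softmax of logits $h_\beta(x_s)$, I would compute $\nabla^2_\beta \mathcal{L}^x$ via the standard Gauss--Newton decomposition. This yields the term $\mathbb{E}_s[J_\beta(x_s)^\top (\mathrm{diag}(g_\beta(x_s)) - g_\beta(x_s) g_\beta(x_s)^\top) J_\beta(x_s)]$ plus a residual involving the second derivatives of $h_\beta$ weighted by the softmax error $g_\beta(x_s) - f(x)$. The matrix $\mathrm{diag}(p) - pp^\top$ is the covariance of a categorical distribution and is therefore PSD, making the Gauss--Newton term PSD; the residual vanishes in expectation at stationary points of $\mathcal{L}^x$, so the full Hessian is PSD on the relevant region.

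Given that $H := \nabla^2 \mathcal{L}_{\textrm{surr}}(\beta)$ is symmetric PSD, the $\mu$-strong-convexity statement then becomes essentially definitional: setting $\mu$ equal to the minimum eigenvalue of $H$ gives $H \succeq \mu I$, which is the standard second-order characterization of $\mu$-strong convexity. The argument therefore splits into two pieces: a Hessian calculation that exhibits the PSD structure arising from softmax cross-entropy, and an appeal to the eigenvalue/convexity equivalence that defines $\mu$.

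The main obstacle will be ensuring $\mu > 0$ rather than merely $\mu \geq 0$, since Theorem~\ref{theorem:surrogate} uses $\mu$ inside the contraction factor $(1 - \mu\alpha)^t$ and the bound would become vacuous if $\mu = 0$. The Gauss--Newton term can degenerate whenever the Jacobian $J_\beta$ is rank-deficient, which happens generically for overparameterized side networks. The natural fix, consistent with the gradient-descent analysis the lemma is meant to support, is to interpret the statement locally in a neighborhood of the optimum $\beta^\star$ and to impose a mild non-degeneracy condition on $g_\beta$ so that the infimum of the minimum eigenvalue of $H$ over that neighborhood is strictly positive.
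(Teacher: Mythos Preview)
Your approach is substantially more careful than the paper's and takes a genuinely different route. The paper's proof is a three-line argument: it asserts that $\mathcal{L}_{\mathrm{surr}}$ is convex in $\beta$ because $D_{\mathrm{KL}}(f(x)\|\cdot)$ is convex in its second argument and $g_\beta$ is smooth in $\beta$, then claims strict convexity of the KL divergence forces the minimum Hessian eigenvalue to be positive. Neither step is actually valid in general---composing a convex function with a smooth (non-affine) map need not preserve convexity, and strict convexity in the output variable says nothing about strict convexity in the parameters once a nonlinear network sits in between. Your Gauss--Newton decomposition is the standard correct way to expose the Hessian structure of a softmax cross-entropy loss, and it makes explicit exactly where the PSD part comes from and where the residual curvature lives.

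What your analysis buys is honesty about the two real obstacles the paper's proof hides: the residual second-derivative term that only vanishes at stationary points (so global convexity in $\beta$ is not available for a nonlinear $g_\beta$), and the rank-deficiency of $J_\beta$ in the overparameterized regime that can drive $\mu$ to zero. Your proposed fix---interpret the lemma locally near $\beta^\star$ under a non-degeneracy assumption---is the right way to salvage a statement strong enough to feed into the contraction bound of Theorem~\ref{theorem:surrogate}. The paper's proof, by contrast, simply declares $\mu>0$ by fiat. So your route is longer but identifies and addresses gaps the paper's argument does not; you should keep the Gauss--Newton calculation and state the local/non-degeneracy hypotheses explicitly rather than trying to match the paper's global claim.
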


\begin{proof}
The expected loss for a single input $x$ under the new objective function is given by:
\begin{equation}
    \mathcal{L}_\textrm{surr}(\beta) = \mathbb{E}_{s \sim \mathcal{U}(0, d)} \left[ D_{\text{KL}}(f(x) \| g_\beta(x_s)) \right].
\end{equation}
This loss function is convex in $\beta$ because the KL divergence $D_{\text{KL}}(f(x) \| g_\beta(x_s))$ is convex in $g_\beta(x_s)$, and $g_\beta(x_s)$ is a smooth function of $\beta$. The Hessian of $\mathcal{L}_\textrm{surr}(\beta)$ with respect to $\beta$ is:
\begin{equation}
    \nabla^2_\beta \mathcal{L}_\textrm{surr}(\beta) = \mathbb{E}_{s \sim \mathcal{U}(0, d)} \left[ \nabla^2_\beta D_{\text{KL}}(f(x) \| g_\beta(x_s)) \right].
\end{equation}
The convexity of $\mathcal{L}_\textrm{surr}(\beta)$ is determined by the smallest eigenvalue of this Hessian, $\mu$. Since the KL divergence is strictly convex, the minimum eigenvalue is positive, which implies $\mu$-strong convexity.
\end{proof}

\begin{theorem}
% \label{theorem:surrogate}
Let the surrogate model be trained using gradient descent with step size $\alpha$ for $t$ iterations. The expected KL divergence between the original model’s predictions $f(x)$ and the surrogate model’s predictions $g_{\beta}(x_s)$ is upper-bounded by:
\begin{equation}
\mathbb{E}_{x \sim p(x), s \sim \mathcal{U}(0, d)} \left[ D_{\text{KL}}\left( f(x) \| g_{\beta}(x_s) \right) \right] \leq \frac{1}{2\mu} (1 - \mu \alpha)^t \left( \mathcal{L}_\textrm{surr}(\beta_0) - \mathcal{L}_\textrm{surr}^\star \right),
\end{equation}
where $\beta_0$ is the initial parameter value, and $\mathcal{L}_\textrm{surr}^\star$ is the optimal value during optimization.
\end{theorem}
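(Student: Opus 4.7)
The plan is to combine Lemma~\ref{lemma:surrogate} with a classical gradient-descent convergence argument for strongly convex, smooth objectives. Observe first that by the definition in Eq.~(\ref{eq:surrogate}), the left-hand side of the theorem is precisely $\mathcal{L}_\textrm{surr}(\beta_t)$, so the task reduces to controlling the loss value itself in terms of the initial gap $\mathcal{L}_\textrm{surr}(\beta_0) - \mathcal{L}_\textrm{surr}^\star$. Beyond the $\mu$-strong convexity supplied by Lemma~\ref{lemma:surrogate}, I would invoke $L$-smoothness of $\mathcal{L}_\textrm{surr}$, which is reasonable because $g_\beta$ is a composition of smooth transformer blocks, and take the step size small enough, say $\alpha \leq 1/L$.

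Next, I would apply the standard descent lemma for $L$-smooth functions, giving $\mathcal{L}_\textrm{surr}(\beta_{t+1}) \leq \mathcal{L}_\textrm{surr}(\beta_t) - \tfrac{\alpha}{2}\|\nabla \mathcal{L}_\textrm{surr}(\beta_t)\|^2$ whenever $\alpha \leq 1/L$. Strong convexity from Lemma~\ref{lemma:surrogate} yields the Polyak-Łojasiewicz inequality $\|\nabla \mathcal{L}_\textrm{surr}(\beta)\|^2 \geq 2\mu(\mathcal{L}_\textrm{surr}(\beta) - \mathcal{L}_\textrm{surr}^\star)$. Plugging this into the descent lemma produces the one-step contraction $\mathcal{L}_\textrm{surr}(\beta_{t+1}) - \mathcal{L}_\textrm{surr}^\star \leq (1 - \mu\alpha)(\mathcal{L}_\textrm{surr}(\beta_t) - \mathcal{L}_\textrm{surr}^\star)$, which iterated $t$ times gives the geometric decay $\mathcal{L}_\textrm{surr}(\beta_t) - \mathcal{L}_\textrm{surr}^\star \leq (1-\mu\alpha)^t (\mathcal{L}_\textrm{surr}(\beta_0) - \mathcal{L}_\textrm{surr}^\star)$.

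The additional $\tfrac{1}{2\mu}$ factor in the stated bound is obtained from a final application of the PL-type estimate $\mathcal{L}_\textrm{surr}(\beta) - \mathcal{L}_\textrm{surr}^\star \leq \tfrac{1}{2\mu}\|\nabla \mathcal{L}_\textrm{surr}(\beta)\|^2$, combined with a smoothness-based control of $\|\nabla \mathcal{L}_\textrm{surr}(\beta_t)\|^2$ in terms of the loss gap; this rewrites the geometric decay into the advertised form. Substituting the identity $\mathcal{L}_\textrm{surr}(\beta_t) = \mathbb{E}_{x,s}[D_{\text{KL}}(f(x) \| g_\beta(x_s))]$ then delivers the inequality claimed in the theorem.

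The main obstacle will be justifying the strong-convexity hypothesis globally: $g_\beta$ is a deep network, so $\mathcal{L}_\textrm{surr}$ is not literally convex in $\beta$ over the whole parameter space, and the same caveat applies to smoothness. Following prior work~\citep{simon2020projected,vitshap}, I would read the argument as a local analysis near the optimum $\beta^\star$, with $\mu$ interpreted as the minimal eigenvalue of the Hessian at $\beta^\star$ and $L$ as a local smoothness constant. Once this interpretation is granted, the remaining algebraic steps are routine and follow the classical template for gradient descent on $\mu$-strongly convex losses.
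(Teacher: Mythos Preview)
Your proposal follows essentially the same route as the paper's proof: invoke Lemma~\ref{lemma:surrogate} for $\mu$-strong convexity, apply the standard linear-convergence bound $\mathcal{L}_\textrm{surr}(\beta_t)-\mathcal{L}_\textrm{surr}^\star \leq (1-\mu\alpha)^t(\mathcal{L}_\textrm{surr}(\beta_0)-\mathcal{L}_\textrm{surr}^\star)$ for gradient descent, and then identify the expected KL divergence on the left-hand side with $\mathcal{L}_\textrm{surr}(\beta_t)$. You are more explicit than the paper in deriving the contraction via the descent lemma and the PL inequality (and in flagging the implicit smoothness assumption and the local nature of the convexity claim for a deep network), whereas the paper simply cites ``the standard result''; in both treatments the final $\tfrac{1}{2\mu}$ prefactor is inserted without a fully worked justification.
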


\begin{proof}
Let the surrogate model be trained using gradient descent with step size $\alpha$ for $t$ iterations. The optimization process for minimizing the expected KL divergence between $f(x)$ and $g_\beta(x_s)$ can be written as:
\begin{equation}
    \beta_{t+1} = \beta_t - \alpha \nabla_\beta \mathcal{L}_\textrm{surr}(\beta_t).
\end{equation}
Because $\mathcal{L}_\textrm{surr}(\beta)$ is $\mu$-strongly convex, we can apply the standard result for gradient descent convergence on strongly convex functions, which gives the following bound:
\begin{equation}
    \mathcal{L}_\textrm{surr}(\beta_t) - \mathcal{L}_\textrm{surr}^\star \leq (1 - \mu \alpha)^t \left( \mathcal{L}_\textrm{surr}(\beta_0) - \mathcal{L}_\textrm{surr}^\star \right),
\end{equation}
where $\mathcal{L}_\textrm{surr}^\star$ is the optimal value, and $\beta_0$ is the initial parameter value.

Since the KL divergence is bounded by the expected loss, we have:
\begin{equation}
    \mathbb{E}_{x \sim p(x), s \sim \mathcal{U}(0, d)} \left[ D_{\text{KL}}(f(x) \| g_\beta(x_s)) \right] \leq \mathcal{L}_\textrm{surr}(\beta_t).
\end{equation}
Substituting the bound on $\mathcal{L}_\textrm{surr}(\beta_t)$, we obtain:
\begin{equation}
    \mathbb{E}_{x \sim p(x), s \sim \mathcal{U}(0, d)} \left[ D_{\text{KL}}(f(x) \| g_\beta(x_s)) \right] \leq \frac{1}{2\mu} (1 - \mu \alpha)^t \left( \mathcal{L}_\textrm{surr}(\beta_0) - \mathcal{L}_\textrm{surr}^\star \right).
\end{equation}
\end{proof}

\begin{lemma}
\label{lemma:explainer}
    For a single input-output pair $(x,y)$, the expected loss under~Eq.(\ref{eq:explainer}) for the prediction $\phi_{\theta}(x, y)$ is $\mu$-strongly convex with $\mu = H_{d - 1}^{-1}$, where $H_{d - 1}$ is the $(d - 1)$-th harmonic number.
\end{lemma}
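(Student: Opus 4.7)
The plan is to fix the input-output pair $(x,y)$ and treat $\phi := \phi_{\theta}(x,y) \in \mathbb{R}^d$ as a free variable, so that the expected loss becomes a pure quadratic form in $\phi$:
\begin{equation*}
    L(\phi) \;=\; \mathbb{E}_{s \sim q(s)}\!\left[\bigl(v(s) - s^\top \phi\bigr)^2\right], \qquad v(s) := g_\beta(x_s \mid y) - g_\beta(x_{\vzero} \mid y).
\end{equation*}
Strong convexity for the prediction is then equivalent to a uniform lower bound on the spectrum of $\nabla^2 L(\phi) = 2\,M$, where $M := \mathbb{E}_{s \sim q(s)}[s s^\top]$, restricted to the tangent space of the efficiency constraint, i.e.\ the hyperplane $\vone^{\perp}$.

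The key observation is that the Shapley kernel $q(s)$ depends only on $\vone^\top s$, hence is invariant under coordinate permutations. Consequently $M$ is a symmetric matrix of the form $M = (p - q)\,I + q\,J$, where $p = \Pr_{s \sim q}[s_i = 1]$, $q = \Pr_{s \sim q}[s_i = s_j = 1]$ for $i \ne j$, and $J = \vone\vone^\top$. This matrix has eigenvalue $p + (d-1)q$ on $\mathrm{span}(\vone)$ and eigenvalue $p - q$ with multiplicity $d-1$ on $\vone^{\perp}$. Since the efficiency constraint pins $\vone^\top \phi$ and leaves only variations along $\vone^{\perp}$, the relevant strong-convexity constant is $2(p - q)$.

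It remains to evaluate $p$ and $q$. I would first compute the normalizer
\begin{equation*}
    Z = \sum_{k=1}^{d-1} \binom{d}{k} \cdot \frac{d-1}{\binom{d}{k} k (d-k)} = (d-1) \sum_{k=1}^{d-1} \frac{1}{k(d-k)} = \frac{2(d-1)\,H_{d-1}}{d},
\end{equation*}
using the partial fraction identity $\tfrac{1}{k(d-k)} = \tfrac{1}{d}\bigl(\tfrac{1}{k} + \tfrac{1}{d-k}\bigr)$. Then $p = 1/2$ follows either from the $s \leftrightarrow \vone - s$ symmetry of $q(s)$ or by a direct sum using $\binom{d-1}{k-1}/\binom{d}{k} = k/d$. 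For $q$, I would compute the second moment of $\vone^\top s$ via $\mathbb{E}[(\vone^\top s)^2] = \sum_k k^2 \cdot (\text{mass at level } k)$, arriving at $q = \tfrac{H_{d-1}-1}{2 H_{d-1}}$, and therefore $p - q = \tfrac{1}{2 H_{d-1}}$. Combined with the factor of $2$ in the Hessian this gives $\mu = H_{d-1}^{-1}$.

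The main obstacle is the second step: bookkeeping the harmonic-number identities cleanly enough to obtain the exact constant $1/(2H_{d-1})$. Everything else is a standard consequence of permutation symmetry of $q(s)$ and the fact that the loss is quadratic in $\phi$; the efficiency constraint plays only the role of restricting the ambient space to $\vone^{\perp}$, where the small eigenvalue of $M$ lives.
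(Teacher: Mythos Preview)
Your approach is essentially the same as the paper's: expand the loss as a quadratic in $\phi$, use permutation invariance of $q(s)$ to write $\mathbb{E}[ss^\top]=(p-q)I+q\,\vone\vone^\top$, and evaluate $p-q=1/(2H_{d-1})$ via harmonic-number identities (your partial-fraction and $s\leftrightarrow\vone-s$ shortcuts are slightly slicker than the paper's direct sums for $A_{ii}$ and $A_{ij}$, but the argument is the same). One minor correction: you do not need to invoke the efficiency constraint in this lemma, since $p-q$ is already the \emph{global} minimum eigenvalue of $M$ (the eigenvalue $p+(d-1)q$ along $\vone$ is larger), so the unconstrained expected loss is $\mu$-strongly convex on all of $\mathbb{R}^d$, which is precisely what the paper states and proves.
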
 
\begin{proof}
For an input-output pair $(x, y)$, the expected loss for the prediction $\phi = \phi_{\theta}(x, y)$ is defined as
\begin{equation}
\begin{aligned}
    L_{\theta}(x, y) &= \E_{s\sim p(s)} \Big[ \big( g_\beta(x_s| y) - g_\beta(x_{\vzero}| y) - s^\top \phi \big)^2 \Big] \\
    &= \phi^\top \E_{s\sim p(s)}[s s^\top] \phi - 2 \E_{s\sim p(s)} \Big[s \big( g_\beta(x_s| y) - g_\beta(x_{\vzero}| y) \big) \Big] \phi \\  
    & +\E_{s\sim p(s)} \Big[ \big( g_\beta(x_s| y) - g_\beta(x_{\vzero}| y) \big)^2 \Big]. 
\end{aligned}
\end{equation}

This is a quadratic function of $\phi$ with its Hessian given by
\begin{equation}
    \nabla_\theta^2 L_{\theta}(x, y) = 2 \cdot \E_{s\sim p(s)}[s s^\top].
\end{equation}

The eigenvalues of the Hessian determine the convexity of $L_{\theta}(x, y)$, and the entries of the Hessian can be derived from the subset distribution $p(s)$. The distribution assigns equal probability to subsets with the same cardinality, thus we define the shorthand $p_k \equiv p(s)$ for $s$ such that $\vone^\top s = k$. Specifically, we have:
\begin{equation}
    p_k = Q^{-1} \frac{d - 1}{\binom{d}{k} k(d - k)} \quad\quad \text{and} \quad\quad
    Q = \sum_{k = 1}^{d - 1} \frac{d - 1}{k(d - k)}.
\end{equation}

We can then write $A \equiv \E_{s\sim p(s)}[s s^\top]$ and derive its entries as follows:

\begin{equation}
\begin{aligned}
    A_{ii} &= \Pr(s_i = 1) = \sum_{k = 1}^d \binom{d - 1}{k - 1} p_k \\
    &= Q^{-1} \sum_{k = 1}^{d - 1} \frac{d - 1}{d(d - k)} = \frac{\sum_{k = 1}^{d - 1} \frac{d - 1}{d(d - k)}}{\sum_{k = 1}^{d - 1} \frac{d - 1}{k(d - k)}} \\
\end{aligned}
\end{equation}

\begin{equation}
\begin{aligned}
    A_{ij} &= \Pr(s_i = s_j = 1) = \sum_{k = 2}^d \binom{d - 2}{k - 2} p_k \\
    &= Q^{-1} \sum_{k = 2}^{d - 1} \frac{k - 1}{d(d - k)} = \frac{\sum_{k = 2}^{d - 1} \frac{k - 1}{d(d - k)}}{\sum_{k = 1}^{d - 1} \frac{d - 1}{k(d - k)}}.
\end{aligned}
\end{equation}

Based on this, we observe that $A$ has the structure $A = (b - c) I_d + c \vone \vone^\top$, where $b \equiv A_{ii} - A_{ij}$ and $c \equiv A_{ij}$. Following \citep{simon2020projected,vitshap}, the minimum eigenvalue is given by $\lambda_{\min}(A) = b - c$. A more detailed derivation reveals that it depends on the $(d - 1)$-th harmonic number, $H_{d - 1}$:

\begin{equation}
\begin{aligned}
    \lambda_{\mathrm{min}}(A) &= b - c = A_{ii} - A_{ij} \\
    &= \frac{\sum_{k = 1}^{d - 1} \frac{d - 1}{d(d - k)}}{\sum_{k = 1}^{d - 1} \frac{d - 1}{k(d - k)}} - \frac{\sum_{k = 2}^{d - 1} \frac{k - 1}{d(d - k)}}{\sum_{k = 1}^{d - 1} \frac{d - 1}{k(d - k)}} \\
    % &= \frac{\frac{1}{d} + \sum_{k = 2}^{d - 1} \left( \frac{d - 1}{d(d - k)} - \frac{k - 1}{d(d - k)} \right)}{\sum_{k = 1}^{d - 1} \frac{d - 1}{k(d - k)}} \\
    &= \frac{\frac{1}{d} + \sum_{k = 2}^{d - 1} \frac{d - k}{d(d - k)}}{\sum_{k = 1}^{d - 1} \frac{d - 1}{k(d - k)}} = \frac{\frac{1}{d} + \frac{d - 2}{d}}{\sum_{k = 1}^{d - 1} \frac{d - 1}{k(d - k)}} \\
    &= \frac{d - 1}{d} \cdot \frac{1}{\sum_{k = 1}^{d - 1} \frac{d - 1}{k(d - k)}} = \frac{1}{2 \sum_{k = 1}^{d - 1} \frac{1}{k}} = \frac{1}{2 H_{d - 1}}.
\end{aligned}
\end{equation}

The minimum eigenvalue is therefore strictly positive, implying that $L_{\theta}(x, y)$ is $\mu$-strongly convex, where $\mu$ is given by
\begin{equation}
\begin{aligned}
    \mu &= 2 \cdot \lambda_{\mathrm{min}}(A) = H_{d - 1}^{-1}.
\end{aligned}
\end{equation}

Note that the strong convexity constant $\mu$ is independent of $(x, y)$ and is determined solely by the number of input variables $d$.
\end{proof}

\begin{theorem}
% \label{theorem:explainer}
    Let $\phi_v(x|y)$ denote the exact Shapley value for input-output pair $(x,y)$ in game $v$. The expected regression loss $\mathcal{L}_\textrm{exp}(\theta)$ upper bounds the Shapley value estimation error as follows,
\begin{equation}
    \E_{(x,y)\sim p(x,y)} \Big[ \big|\big|\phi_{\theta}(x, y) - \phi_v(x|y))\big|\big|_2 \Big]
    \leq \sqrt{2 H_{d - 1} \Big( \mathcal{L}_\textrm{exp}(\theta) - \mathcal{L}^\star_\textrm{exp} \Big)},
\end{equation}
where $\mathcal{L}^\star_\textrm{exp}$ represents the optimal loss achieved by the exact Shapley values.
\end{theorem}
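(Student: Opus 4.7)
The plan is to leverage the per-sample strong convexity from Lemma~\ref{lemma:explainer} to convert excess loss into estimation error, and then transport this pointwise bound into an expectation bound via Jensen's inequality. Let me fix an input--output pair $(x,y)$ and denote the inner (over $s\sim q(s)$) expected loss by $L_\theta(x,y)$, so that $\mathcal{L}_\textrm{exp}(\theta) = \E_{p(x,y)}[L_\theta(x,y)]$. The crucial prerequisite is that, for each fixed $(x,y)$, the unique minimizer of $L_\theta(x,y)$ over $\phi$ subject to the efficiency constraint is exactly the true Shapley vector $\phi_v(x\mid y)$. This is the classical Charnes et al.\ weighted least squares characterization of Shapley values that underpins KernelSHAP and FastSHAP, and it applies here because the efficiency constraint is enforced via additive normalization.

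Given this, the first substantive step is to invoke $\mu$-strong convexity from Lemma~\ref{lemma:explainer} with $\mu = H_{d-1}^{-1}$. Since $\phi_v(x\mid y)$ is the constrained minimizer and the gradient vanishes there (along directions tangent to the constraint, which includes $\phi_\theta(x,y)-\phi_v(x\mid y)$ after normalization), strong convexity gives
\begin{equation}
L_\theta(x,y) - L^\star_\theta(x,y) \;\geq\; \frac{\mu}{2}\,\bigl\|\phi_\theta(x,y) - \phi_v(x\mid y)\bigr\|_2^2,
\end{equation}
where $L^\star_\theta(x,y) = L_\theta(x,y)\big|_{\phi=\phi_v(x\mid y)}$. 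Rearranging and taking square roots yields the pointwise bound
\begin{equation}
\bigl\|\phi_\theta(x,y) - \phi_v(x\mid y)\bigr\|_2 \;\leq\; \sqrt{\tfrac{2}{\mu}\bigl(L_\theta(x,y) - L^\star_\theta(x,y)\bigr)}.
\end{equation}

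Second, I would take expectation over $(x,y)\sim p(x,y)$ on both sides. Because $\sqrt{\cdot}$ is concave, Jensen's inequality allows the expectation to be pushed inside the root:
\begin{equation}
\E_{p(x,y)}\!\Bigl[\sqrt{\tfrac{2}{\mu}(L_\theta(x,y) - L^\star_\theta(x,y))}\Bigr] \;\leq\; \sqrt{\tfrac{2}{\mu}\,\E_{p(x,y)}\!\bigl[L_\theta(x,y) - L^\star_\theta(x,y)\bigr]}.
\end{equation}
The expectation on the right is precisely $\mathcal{L}_\textrm{exp}(\theta) - \mathcal{L}^\star_\textrm{exp}$ (since $\mathcal{L}^\star_\textrm{exp}$ is the loss attained by the exact Shapley values), and substituting $\tfrac{2}{\mu} = 2 H_{d-1}$ produces the stated inequality.

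The main obstacle, and the place where care is needed, is justifying that $\phi_v(x\mid y)$ is the constrained minimizer of the per-sample loss so that Lemma~\ref{lemma:explainer}'s strong convexity translates directly into a distance-to-optimum bound. This is not a computation but a structural fact about the Shapley kernel objective; once it is cited from the Charnes et al.\ characterization and combined with the efficiency-projection step used in training, the rest of the argument is essentially a one-line application of strong convexity plus Jensen's inequality.
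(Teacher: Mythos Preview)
Your proposal is correct and follows essentially the same route as the paper's proof: per-sample strong convexity (Lemma~\ref{lemma:explainer}) gives a pointwise quadratic bound relating excess loss to $\|\phi_\theta(x,y)-\phi_v(x\mid y)\|_2^2$, then expectation over $p(x,y)$ and Jensen's inequality for $\sqrt{\cdot}$ yield the result. The only cosmetic difference is that the paper handles the efficiency constraint by writing the Lagrangian explicitly and invoking the KKT stationarity condition to kill the first-order term, whereas you phrase the same step geometrically (the difference $\phi_\theta(x,y)-\phi_v(x\mid y)$ lies in the tangent space of the affine constraint, so the first-order term vanishes); these are equivalent.
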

\begin{proof}
We begin by considering a single input-output pair $(x, y)$, where the prediction is given by $\phi = \phi_{\theta}(x, y; \theta)$. To account for the linear constraint (the Shapley value's efficiency constraint) in our objective, we write the Lagrangian $L_{\theta}(x, y, \gamma)$:
\begin{equation}
    L_{\theta}(x, y, \gamma) = L_{\theta}(x, y) + \gamma \left( g_\beta(x_{\mathbf{1}} | y) - g_\beta(x_{\mathbf{0}} | y) - \mathbf{1}^\top \phi \right),
\end{equation}
where $\gamma \in \mathbb{R}$ is the Lagrange multiplier. The Lagrangian $L_{\theta}(x, y, \gamma)$ is $\mu$-strongly convex, sharing the same Hessian as $L_{\theta}(x, y)$:
\begin{equation}
    \nabla_\theta^2 L_{\theta}(x, y, \gamma) = \nabla_\theta^2 L_{\theta}(x, y).
\end{equation}

By strong convexity, we can bound the distance between $\phi$ and the global minimizer using the Lagrangian's value. Let $(\theta^\star, \gamma^\star)$ be the optimizer of the Lagrangian, such that
\begin{equation}
    \phi_{\theta^\star}(x, y) = \phi_v(x|y),
\end{equation}
where $\phi_v(x|y)$ is the exact Shapley value.

From the first-order condition of strong convexity, we obtain the inequality:
\begin{equation}
    L_{\theta}(x, y, \gamma^\star) \geq L_{\theta^\star}(x, y, \gamma^\star) + (\phi - \phi_{\theta^\star}(x, y))^\top \nabla_\theta L_{\theta^\star}(x, y, \gamma^\star) + \frac{\mu}{2} \|\phi - \phi_{\theta^\star}(x, y)\|^2_2.
\end{equation}

By the KKT conditions, $\nabla_\theta L_{\theta^\star}(x, y, \gamma^\star) = 0$, so the inequality simplifies to:
\begin{equation}
    L_{\theta}(x, y, \gamma^\star) \geq L_{\theta^\star}(x, y, \gamma^\star) + \frac{\mu}{2} \|\phi - \phi_{\theta^\star}(x, y)\|^2_2.
\end{equation}

Rearranging this, we get:
\begin{equation}
    \|\phi - \phi_{\theta^\star}(x, y)\|^2_2 \leq \frac{2}{\mu} \left( L_{\theta}(x, y, \gamma^\star) - L_{\theta^\star}(x, y, \gamma^\star) \right).
\end{equation}

Since $\phi$ is a feasible solution (i.e., it satisfies the linear constraint), this further simplifies to:
\begin{equation}
    \|\phi - \phi_{\theta^\star}(x, y)\|^2_2 \leq \frac{2}{\mu} \left( L_{\theta}(x, y) - L_{\theta^\star}(x, y) \right).
\end{equation}

Next, we take the expectation over $(x, y) \sim p(x, y)$. Denote the expected regression loss as $\mathcal{L}_{exp}(\theta)$, which is:
\begin{equation}
    \mathcal{L}_\textrm{exp}(\theta) = \mathbb{E}_{(x, y) \sim p(x, y)} \left[ \big( g_\beta(x_s | y) - g_\beta(x_{\mathbf{0}} | y) - s^\top \phi_{\theta}(x, y; \theta) \big)^2 \right].
\end{equation}

Let $\mathcal{L}_\textrm{exp}^\star$ denote the loss achieved by the exact Shapley values. Taking the bound from the previous inequality in expectation, we have:
\begin{equation}
    \mathbb{E}_{(x, y) \sim p(x, y)} \left[ \|\phi_{\theta}(x, y; \theta) - \phi_v(x | y)\|^2_2 \right] \leq \frac{2}{\mu} \left( \mathcal{L}_\textrm{exp}(\theta) - \mathcal{L}_\textrm{exp}^\star \right).
\end{equation}

Finally, applying Jensen's inequality to the left-hand side, we obtain:
\begin{equation}
    \mathbb{E}_{(x, y) \sim p(x, y)} \left[ \|\phi_{\theta}(x, y; \theta) - \phi_v(x | y)\|_2 \right] \leq \sqrt{\frac{2}{\mu} \left( \mathcal{L}_\textrm{exp}(\theta) - \mathcal{L}_\textrm{exp}^\star \right)}.
\end{equation}

Substituting the value of $\mu = 1/H_{d - 1}$ from Lemma~\ref{lemma:explainer}, we conclude that:
\begin{equation}
    \mathbb{E}_{(x, y) \sim p(x, y)} \left[ \|\phi_{\theta}(x, y; \theta) - \phi_v(x | y)\|_2 \right] \leq \sqrt{2 H_{d - 1} \left( \mathcal{L}_\textrm{exp}(\theta) - \mathcal{L}_\textrm{exp}^\star \right)}.
\end{equation}
\end{proof}

\section{Additional Results for \mymethod{}}\label{app:results}

In addition to the results on ImageNette included in the main paper, we provide more detailed results on other datasets ranging from image classification tasks to text classification tasks, against a number of baseline explanation methods, with respect to other model sizes in this section. 

\begin{table}[ht!]
\caption{Performance metrics for ViT-tiny on ImageNette.}
\centering
\resizebox{0.5\textwidth}{!}{
\begin{tabular}{@{}lcc@{}}
\toprule
\textbf{Method}                                   & \textbf{Insertion (↑)}                                    & \textbf{Deletion (↓)}                                     \\ \midrule
Random                                            & 0.9231{\scriptsize $\pm$0.1094} & 0.9229{\scriptsize $\pm$0.1106}          \\ \midrule
Attention last                                    & 0.9281{\scriptsize $\pm$0.1033}& 0.7311{\scriptsize $\pm$0.2422}\\
Attention rollout                                 & 0.9138{\scriptsize $\pm$0.1102}& 0.7306{\scriptsize $\pm$0.2449}\\
GradCAM (Attn)                                    & 0.9155{\scriptsize $\pm$0.1242}& 0.8292{\scriptsize $\pm$0.2089}\\
GradCAM (LN)                                      & 0.9280{\scriptsize $\pm$0.0937}& 0.8436{\scriptsize $\pm$0.2046}\\
Vanilla (Pixel)                                   & 0.9006{\scriptsize $\pm$0.1173}& 0.8161{\scriptsize $\pm$0.2034}\\
Vanilla (Embed)                                   & 0.9131{\scriptsize $\pm$0.1109}& 0.7708{\scriptsize $\pm$0.2272}\\
IntGrad (Pixel)                                   & 0.9383{\scriptsize $\pm$0.0808}& 0.8734{\scriptsize $\pm$0.1817}\\
IntGrad (Embed)                                   & 0.9317{\scriptsize $\pm$0.0808}& 0.8092{\scriptsize $\pm$0.1817}\\
SmoothGrad (Pixel)                                & 0.9153{\scriptsize $\pm$0.1199}& 0.7724{\scriptsize $\pm$0.2236}\\
SmoothGrad (Embed)                                & 0.9268{\scriptsize $\pm$0.1092}& 0.7852{\scriptsize $\pm$0.2176}\\
VarGrad (Pixel)                                   & 0.9219{\scriptsize $\pm$0.1147}& 0.7872{\scriptsize $\pm$0.2157}\\
VarGrad (Embed)                                   & 0.9317{\scriptsize $\pm$0.1063}& 0.8092{\scriptsize $\pm$0.2062}\\
LRP                                               & 0.9439{\scriptsize $\pm$0.0852}& 0.6883{\scriptsize $\pm$0.2603}\\
Leave-one-out                                     & 0.9632{\scriptsize $\pm$0.0401}& 0.7671{\scriptsize $\pm$0.2902}\\
RISE                                              & 0.9743{\scriptsize $\pm$0.0333}& 0.6514{\scriptsize $\pm$0.3028}\\
Covert et al.& \textbf{0.9824{\scriptsize $\pm$0.0289}}& 0.5243{\scriptsize $\pm$0.2579}\\ \midrule
\rowcolor[HTML]{EFEFEF}
\mymethod{} (Ours)& 0.9802{\scriptsize $\pm$0.0268}& \textbf{0.5097{\scriptsize $\pm$0.2688}}\\ \bottomrule
\end{tabular}
}
\label{tab:imagenette_vit_tiny}
\end{table}

\begin{table}[ht!]
\caption{Performance metrics for ViT-small on ImageNette.}
\centering
\resizebox{0.5\textwidth}{!}{
\begin{tabular}{@{}lcc@{}}
\toprule
\textbf{Method}                                   & \textbf{Insertion (↑)}                                    & \textbf{Deletion (↓)}                                     \\ \midrule
Random                                            & 0.9471{\scriptsize $\pm$0.0827}& 0.9461{\scriptsize $\pm$0.0843}\\ \midrule
Attention last                                    & 0.9599{\scriptsize $\pm$0.0771}& 0.7617{\scriptsize $\pm$0.2205}\\
Attention rollout                                 & 0.9293{\scriptsize $\pm$0.0940}& 0.8566{\scriptsize $\pm$0.1793}\\
GradCAM (Attn)                                    & 0.9217{\scriptsize $\pm$0.1205}& 0.9301{\scriptsize $\pm$0.1186}\\
GradCAM (LN)                                      & 0.9239{\scriptsize $\pm$0.0893}& 0.9184{\scriptsize $\pm$0.1571}\\
Vanilla (Pixel)                                   & 0.9535{\scriptsize $\pm$0.1006}& 0.8179{\scriptsize $\pm$0.1686}\\
Vanilla (Embed)                                   & 0.9564{\scriptsize $\pm$0.0894}& 0.8240{\scriptsize $\pm$0.1886}\\
IntGrad (Pixel)                                   & 0.9581{\scriptsize $\pm$0.0738}& 0.9219{\scriptsize $\pm$0.1281}\\
IntGrad (Embed)                                   & 0.9581{\scriptsize $\pm$0.0738}& 0.9219{\scriptsize $\pm$0.1281}\\
SmoothGrad (Pixel)                                & 0.9542{\scriptsize $\pm$0.0770}& 0.7998{\scriptsize $\pm$0.2055}\\
SmoothGrad (Embed)                                & 0.9550{\scriptsize $\pm$0.0788}& 0.8065{\scriptsize $\pm$0.2090}\\
VarGrad (Pixel)                                   & 0.9535{\scriptsize $\pm$0.0798}& 0.8179{\scriptsize $\pm$0.1954}\\
VarGrad (Embed)                                   & 0.9564{\scriptsize $\pm$0.0776}& 0.8240{\scriptsize $\pm$0.1942}\\
LRP                                               & 0.9636{\scriptsize $\pm$0.0637}& 0.7549{\scriptsize $\pm$0.2275}\\
Leave-one-out                                     & 0.9684{\scriptsize $\pm$0.0319}& 0.8815{\scriptsize $\pm$0.2056}\\
RISE                                              & 0.9773{\scriptsize $\pm$0.0206}& 0.7960{\scriptsize $\pm$0.2599}\\
Covert et al.& \textbf{0.9828{\scriptsize $\pm$0.0440}}& 0.6865{\scriptsize $\pm$0.2255}\\ \midrule
\rowcolor[HTML]{EFEFEF}
\mymethod{} (Ours)& 0.9791{\scriptsize $\pm$0.0305}& \textbf{0.6667{\scriptsize $\pm$0.2636}}\\ \bottomrule
\end{tabular}
}
\label{tab:imagenette_vit_small}
\end{table}

\begin{table}[ht!]
\caption{Performance metrics for ViT-large on ImageNette.}
\centering
\resizebox{0.5\textwidth}{!}{
\begin{tabular}{@{}lcc@{}}
\toprule
\textbf{Method}                                   & \textbf{Insertion (↑)}                                    & \textbf{Deletion (↓)}                                     \\ \midrule
Random                                            & 0.9645{\scriptsize $\pm$0.0748}& 0.9642{\scriptsize $\pm$0.0757}\\ \midrule
Attention last                                    & 0.9251{\scriptsize $\pm$0.0794}& 0.8997{\scriptsize $\pm$0.1380}\\
Attention rollout                                 & 0.9336{\scriptsize $\pm$0.0835}& 0.9429{\scriptsize $\pm$0.0997}\\
GradCAM (Attn)                                    & 0.9398{\scriptsize $\pm$0.0692}& 0.9328{\scriptsize $\pm$0.1228}\\
GradCAM (LN)                                      & 0.9590{\scriptsize $\pm$0.0619}& 0.9366{\scriptsize $\pm$0.1330}\\
Vanilla (Pixel)                                   & 0.9040{\scriptsize $\pm$0.1046}& 0.9372{\scriptsize $\pm$0.1106}\\
Vanilla (Embed)                                   & 0.9151{\scriptsize $\pm$0.0959}& 0.9258{\scriptsize $\pm$0.1237}\\
IntGrad (Pixel)                                   & 0.9716{\scriptsize $\pm$0.0584}& 0.9596{\scriptsize $\pm$0.0948}\\
IntGrad (Embed)                                   & 0.9716{\scriptsize $\pm$0.0584}& 0.9596{\scriptsize $\pm$0.0948}\\
SmoothGrad (Pixel)                                & 0.9499{\scriptsize $\pm$0.0778}& 0.8953{\scriptsize $\pm$0.1579}\\
SmoothGrad (Embed)                                & 0.9636{\scriptsize $\pm$0.0681}& 0.8664{\scriptsize $\pm$0.1643}\\
VarGrad (Pixel)                                   & 0.9444{\scriptsize $\pm$0.0827}& 0.9060{\scriptsize $\pm$0.1456}\\
VarGrad (Embed)                                   & 0.9558{\scriptsize $\pm$0.0687}& 0.8827{\scriptsize $\pm$0.1545}\\
LRP                                               & 0.9506{\scriptsize $\pm$0.0646}& 0.8814{\scriptsize $\pm$0.1530}\\
Leave-one-out                                     & 0.9743{\scriptsize $\pm$0.0521}& 0.9534{\scriptsize $\pm$0.1085}\\
RISE                                              & 0.9801{\scriptsize $\pm$0.0373}& 0.9245{\scriptsize $\pm$0.1570}\\
Covert et al.& \textbf{0.9843{\scriptsize $\pm$0.0436}}& 0.7646{\scriptsize $\pm$0.2012}\\ \midrule
\rowcolor[HTML]{EFEFEF}
\mymethod{} (Ours)& 0.9837{\scriptsize $\pm$0.0225}& \textbf{0.6570{\scriptsize $\pm$0.2171}}\\ \bottomrule
\end{tabular}
}
\label{tab:imagenette_vit_large}
\end{table}

\begin{table}[ht!]
\caption{Performance metrics for ViT-base on Oxford-IIIT Pet.}
\centering
\resizebox{0.5\textwidth}{!}{
\begin{tabular}{@{}lcc@{}}
\toprule
\textbf{Method}                                   & \textbf{Insertion (↑)}                                    & \textbf{Deletion (↓)}                                     \\ \midrule
Random                                            & 0.8642{\scriptsize $\pm$0.1855}& 0.8625{\scriptsize $\pm$0.1851}\\ \midrule
Attention last                                    & 0.9066{\scriptsize $\pm$0.1302}& 0.5534{\scriptsize $\pm$0.2183}\\
Attention rollout                                 & 0.8616{\scriptsize $\pm$0.1384}& 0.7387{\scriptsize $\pm$0.2326}\\
GradCAM (Attn)                                    & 0.8726{\scriptsize $\pm$0.1585}& 0.7582{\scriptsize $\pm$0.2296}\\
GradCAM (LN)                                      & 0.8828{\scriptsize $\pm$0.1137}& 0.7648{\scriptsize $\pm$0.2462}\\
Vanilla (Pixel)                                   & 0.8855{\scriptsize $\pm$0.1362}& 0.6551{\scriptsize $\pm$0.2395}\\
Vanilla (Embed)                                   & 0.8996{\scriptsize $\pm$0.1354}& 0.5783{\scriptsize $\pm$0.2405}\\
IntGrad (Pixel)                                   & 0.9219{\scriptsize $\pm$0.1137}& 0.8451{\scriptsize $\pm$0.1990}\\
IntGrad (Embed)                                   & 0.9219{\scriptsize $\pm$0.1137}& 0.8451{\scriptsize $\pm$0.1990}\\
SmoothGrad (Pixel)                                & 0.9140{\scriptsize $\pm$0.1329}& 0.5508{\scriptsize $\pm$0.2347}\\
SmoothGrad (Embed)                                & 0.8731{\scriptsize $\pm$0.1462}& 0.8716{\scriptsize $\pm$0.1514}\\
VarGrad (Pixel)                                   & 0.9145{\scriptsize $\pm$0.1268}& 0.5801{\scriptsize $\pm$0.2366}\\
VarGrad (Embed)                                   & 0.8818{\scriptsize $\pm$0.1437}& 0.8778{\scriptsize $\pm$0.1487}\\
LRP                                               & 0.9192{\scriptsize $\pm$0.1178}& 0.5362{\scriptsize $\pm$0.2258}\\
Leave-one-out                                     & 0.9468{\scriptsize $\pm$0.0666}& 0.7341{\scriptsize $\pm$0.2951}\\
RISE                                              & \textbf{0.9581{\scriptsize $\pm$0.0333}}& 0.6186{\scriptsize $\pm$0.3096}\\
Covert et al.& 0.9422{\scriptsize $\pm$0.1035}& 0.4958{\scriptsize $\pm$0.2404}\\ \midrule
\rowcolor[HTML]{EFEFEF}
\mymethod{} (Ours)& 0.9384{\scriptsize $\pm$0.1088}& \textbf{0.4888{\scriptsize $\pm$0.2480}}\\ \bottomrule
\end{tabular}
}
\label{tab:pet_vit_base}
\end{table}

\begin{table}[ht!]
\caption{Performance metrics for ViT-base on MURA. MURA is a binary classification dataset, we calculated metrics for each of its two categories.}
\centering
\resizebox{0.8\textwidth}{!}{
\begin{tabular}{@{}lcccc@{}}
\toprule
& \multicolumn{2}{c}{\textbf{Abnormal}} & \multicolumn{2}{c}{\textbf{Normal}} \\ \midrule
\textbf{Method} & \textbf{Insertion (↑)} & \textbf{Deletion (↓)} & \textbf{Insertion (↑)} & \textbf{Deletion (↓)} \\ \midrule
Random                                            & 0.8195{\scriptsize $\pm$0.1875}& 0.8206{\scriptsize $\pm$0.1859}& 0.1548{\scriptsize $\pm$0.1396}& 0.1564{\scriptsize $\pm$0.1415}\\ \midrule
Attention last                                    & 0.8416{\scriptsize $\pm$0.1863}& 0.6303{\scriptsize $\pm$0.1912}& 0.1546{\scriptsize $\pm$0.1365}& 0.1849{\scriptsize $\pm$0.1348}\\
Attention rollout                                 & 0.8047{\scriptsize $\pm$0.1887}& 0.7236{\scriptsize $\pm$0.2107}& 0.1758{\scriptsize $\pm$0.1393}& 0.1636{\scriptsize $\pm$0.1366}\\
GradCAM (Attn)                                    & 0.8077{\scriptsize $\pm$0.1883}& 0.8172{\scriptsize $\pm$0.1925}& 0.1611{\scriptsize $\pm$0.1387}& 0.1655{\scriptsize $\pm$0.1518}\\
GradCAM (LN)                                      & 0.8509{\scriptsize $\pm$0.1787}& 0.7451{\scriptsize $\pm$0.2171}& 0.1771{\scriptsize $\pm$0.1537}& 0.1487{\scriptsize $\pm$0.1338}\\
Vanilla (Pixel)                                   & 0.8384{\scriptsize $\pm$0.1764}& 0.5971{\scriptsize $\pm$0.2056}& 0.1710{\scriptsize $\pm$0.1401}& 0.1603{\scriptsize $\pm$0.1310}\\
Vanilla (Embed)                                   & 0.8412{\scriptsize $\pm$0.1774}& 0.5709{\scriptsize $\pm$0.1993}& 0.1666{\scriptsize $\pm$0.1393}& 0.1649{\scriptsize $\pm$0.1295}\\
IntGrad (Pixel)                                   & 0.8677{\scriptsize $\pm$0.1642}& 0.7690{\scriptsize $\pm$0.2261}& 0.2011{\scriptsize $\pm$0.1842}& 0.1326{\scriptsize $\pm$0.1283}\\
IntGrad (Embed)                                   & 0.8677{\scriptsize $\pm$0.1642}& 0.7690{\scriptsize $\pm$0.2261}& 0.2011{\scriptsize $\pm$0.1842}& 0.1326{\scriptsize $\pm$0.1283}\\
SmoothGrad (Pixel)                                & 0.8351{\scriptsize $\pm$0.1893}& 0.6469{\scriptsize $\pm$0.2000}& 0.1610{\scriptsize $\pm$0.1428}& 0.1842{\scriptsize $\pm$0.1385}\\
SmoothGrad (Embed)                                & 0.8293{\scriptsize $\pm$0.1863}& 0.8006{\scriptsize $\pm$0.1971}& 0.1605{\scriptsize $\pm$0.1500}& 0.1552{\scriptsize $\pm$0.1406}\\
VarGrad (Pixel)                                   & 0.8397{\scriptsize $\pm$0.1844}& 0.6667{\scriptsize $\pm$0.2012}& \multicolumn{1}{l}{0.1592{\scriptsize $\pm$0.1404}}                             & \multicolumn{1}{l}{0.1813{\scriptsize $\pm$0.1453}}                             \\
VarGrad (Embed)                                   & 0.8328{\scriptsize $\pm$0.1841}& 0.8022{\scriptsize $\pm$0.1983}& \multicolumn{1}{l}{0.1575{\scriptsize $\pm$0.1461}}                             & \multicolumn{1}{l}{0.1556{\scriptsize $\pm$0.1418}}                             \\
LRP                                               & 0.8524{\scriptsize $\pm$0.1786}& 0.6009{\scriptsize $\pm$0.1932}& \multicolumn{1}{l}{0.1693{\scriptsize $\pm$0.1459}}                             & \multicolumn{1}{l}{0.1745{\scriptsize $\pm$0.1238}}                             \\
Leave-one-out                                     & 0.8996{\scriptsize $\pm$0.1336}& 0.6887{\scriptsize $\pm$0.2412}& \multicolumn{1}{l}{0.2952{\scriptsize $\pm$0.2235}}                             & \multicolumn{1}{l}{0.0977{\scriptsize $\pm$0.0911}}                             \\
RISE                                              & 0.9247{\scriptsize $\pm$0.1037}& 0.6258{\scriptsize $\pm$0.2510}& \multicolumn{1}{l}{0.3470{\scriptsize $\pm$0.2431}}                             & \multicolumn{1}{l}{0.0844{\scriptsize $\pm$0.0786}}                             \\
Covert et al.& \textbf{0.9319{\scriptsize $\pm$0.0795}}& 0.4199{\scriptsize $\pm$0.2136}& \multicolumn{1}{l}{0.4516{\scriptsize $\pm$0.2506}}                             & \multicolumn{1}{l}{\textbf{0.0539{\scriptsize $\pm$0.0478}}}                             \\ \midrule
\rowcolor[HTML]{EFEFEF} 
\mymethod{} (Ours)& 0.9292{\scriptsize $\pm$0.0597}& \textbf{0.4116{\scriptsize $\pm$0.2116}}& \multicolumn{1}{l}{\textbf{0.4563{\scriptsize $\pm$0.2524}}}                             & \multicolumn{1}{l}{0.0581{\scriptsize $\pm$0.0488}}                            \\ \bottomrule
\end{tabular}
}
\label{tab:mura_vit_base}
\end{table}

\begin{table}[ht!]
\caption{Performance metrics for Bert-base on Yelp Review Polarity.}
\centering
\resizebox{0.5\textwidth}{!}{
\begin{tabular}{@{}lcc@{}}
\toprule
\textbf{Method}                                   & \textbf{Insertion (↑)}                                    & \textbf{Deletion (↓)}                                     \\ \midrule
KernelShap                                            & 0.8894{\scriptsize $\pm$0.1324} & 0.4624{\scriptsize $\pm$0.2548}          \\ 
Covert et al.& \textbf{0.9620{\scriptsize $\pm$0.0472}}& 0.1725{\scriptsize $\pm$0.1176}\\ \midrule
\rowcolor[HTML]{EFEFEF}
\mymethod{} (Ours)& 0.9588{\scriptsize $\pm$0.0206}& \textbf{0.1004{\scriptsize $\pm$0.0377}}\\ \bottomrule
\end{tabular}
}
\label{tab:yelp_bert}
\end{table}

\clearpage

\section{Ablation Study on the Size of Side-Networks} \label{app:large_scale}

In this section, we present an ablation study on the architecture and size of side-networks in \mymethod{}. Specifically, we investigated the effect of varying the reduction factor ($r = 2, 4, 8, 16, 32$) on the side-networks of ViT-base model. To evaluate the impact, we compared the performance of \mymethod{} with Covert et al.~\citep{vitshap} on the ImageNette dataset. The evaluation was performed using the insertion and deletion metrics, and the results are summarized in Table~\ref{tab:ablation_base}.

This study aimed to assess the balance between computational efficiency and explanation quality under different configurations of the side-network. The findings are as follows:
\begin{itemize}
    \item \textbf{Too Large $r$:} When the reduction factor $r$ was too large (resulting in a very small side-network), the network lacked sufficient capacity to learn the explanations effectively. As a result, the explanation quality degraded, with the side-network failing to capture enough information from the main branch.
    \item \textbf{Too Small $r$:} Conversely, when $r$ was too small (resulting in a larger side-network), the network consumed excessive computational resources and interfered with the shared features of the predictor. This interference reduced the feature similarity between the prediction and explanation tasks, negatively impacting the faithfulness of the explainer.
    \item \textbf{Moderate $r$:} A moderate reduction factor (\emph{e.g.}, $r = 8$) provided the optimal trade-off between computational efficiency and explanation quality. This configuration allowed the side-network to effectively utilize the features from the predictor while maintaining resource efficiency.
\end{itemize}

These results highlight the importance of carefully selecting the reduction factor to optimize the performance of \mymethod{} across different transformer architectures. The identified trade-offs ensure that the side-network remains both effective and efficient, making it suitable for various applications.

\begin{table}[htbp]
    \centering
    \caption{Explanation Quality Metrics of the ViT-base Explainer on ImageNette dataset.}
    \label{tab:ablation_base}
    \begin{tabular}{@{}lcc@{}}
    \toprule
    \textbf{Method}  & \textbf{Insertion} (↑)  & \textbf{Deletion} (↓)  \\ \midrule
    Covert et al.       & 0.9839{\scriptsize$\pm$0.0375}  & 0.8121{\scriptsize$\pm$0.1768} \\
    \mymethod{} ($r$=2)  & 0.9894{\scriptsize$\pm$0.0329}  & 0.8687{\scriptsize$\pm$0.1806} \\
    \mymethod{} ($r$=4)  & 0.9857{\scriptsize$\pm$0.0251}  & 0.7846{\scriptsize$\pm$0.1957} \\
    \mymethod{} ($r$=8)  & 0.9874{\scriptsize$\pm$0.0265}  & 0.7954{\scriptsize$\pm$0.2294} \\
    \mymethod{} ($r$=16) & 0.9720{\scriptsize$\pm$0.0288} & 0.6202{\scriptsize$\pm$0.2023} \\
    \mymethod{} ($r$=32) & 0.9520{\scriptsize$\pm$0.0443}  & 0.5547{\scriptsize$\pm$0.1941} \\ \bottomrule
    \end{tabular}
\end{table}

\clearpage

\section{Additional Visualizations for \mymethod{}} \label{app:visualizations}

In this section we present a number of image samples on the ViT-base model, regarding explanation outputs from 12 representative baseline explanation methods. We ensure that these samples correspond to correct model predictions made by the base model to ensure better clarity.

\begin{figure}[ht!]
    \centering
    % fit paragraph so this is not 0.8
    \includegraphics[width=0.76\linewidth]{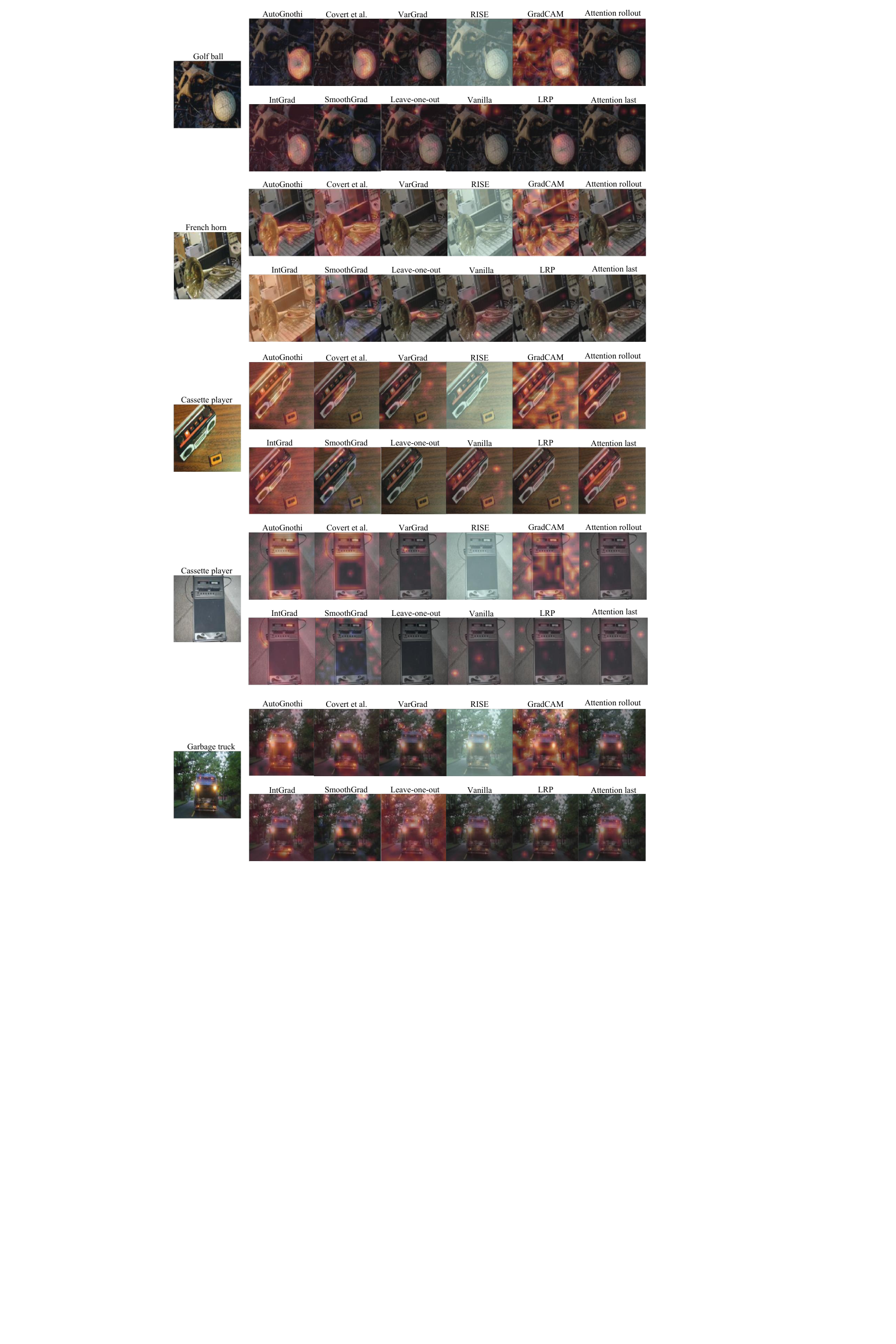}
    \caption{Visualization of ViT-base explanation on ImageNette dataset (1/2).}
    \label{fig:app_imagenette_1}
\end{figure}

\begin{figure}[tb!]
    \centering
    \includegraphics[width=0.8\linewidth]{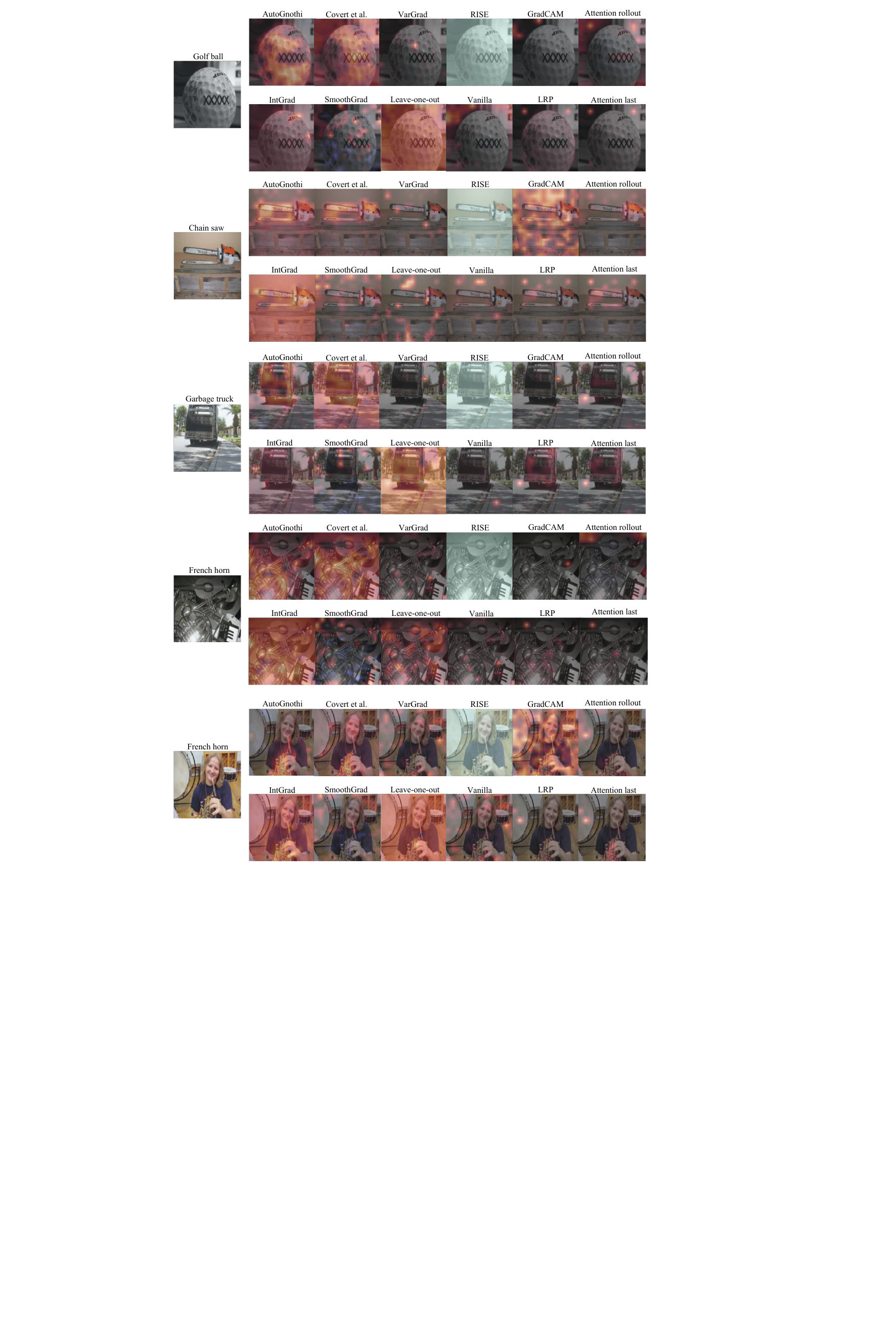}
    \caption{Visualization of ViT-base explanation on ImageNette dataset (2/2).}
    \label{fig:app_imagenette_2}
\end{figure}

\begin{figure}[tb!]
    \centering
    \includegraphics[width=0.8\linewidth]{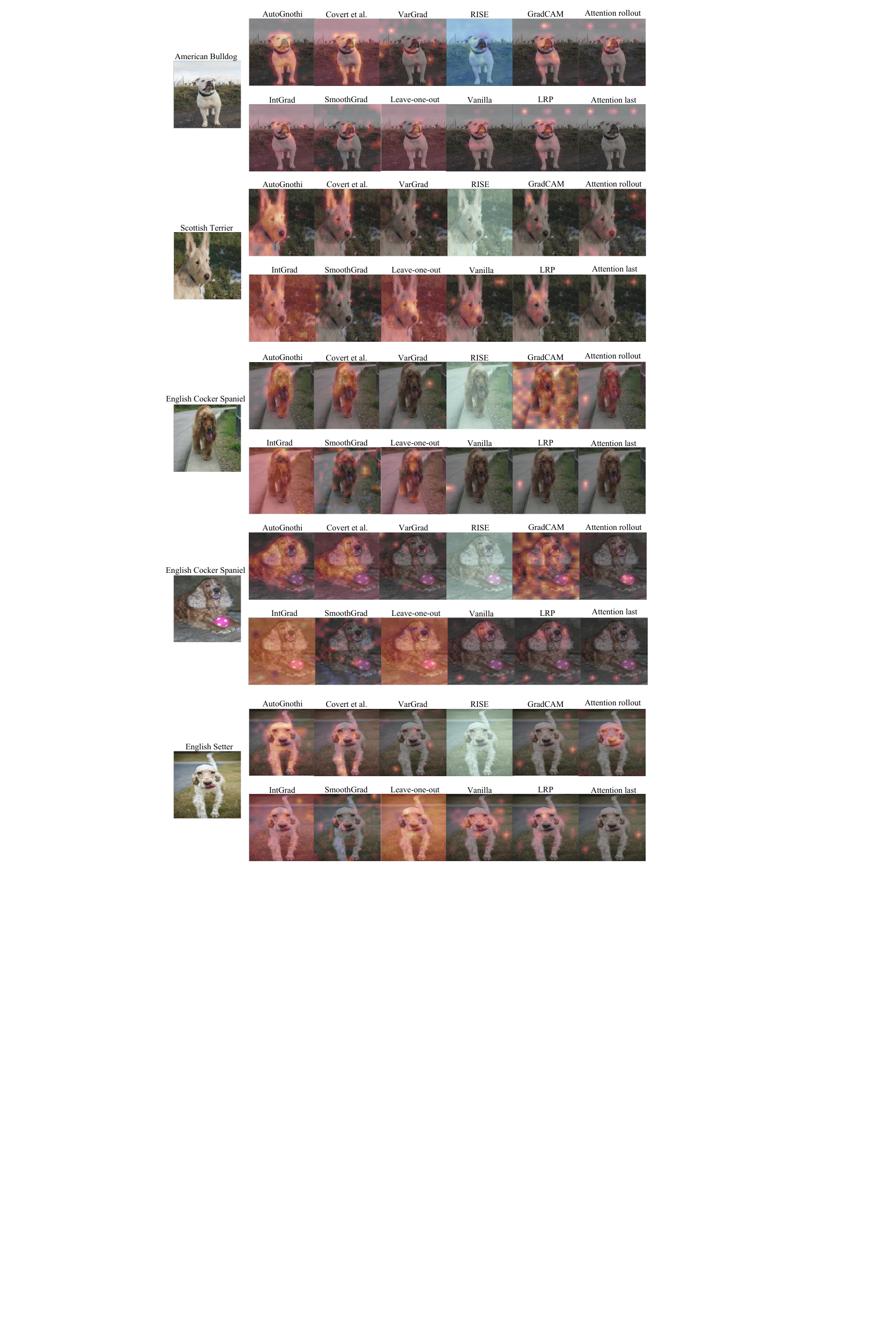}
    \caption{Visualization of ViT-base explanation on Oxford-IIIT Pet dataset (1/2).}
    \label{fig:app_pet_1}
\end{figure}

\begin{figure}[tb!]
    \centering
    \includegraphics[width=0.8\linewidth]{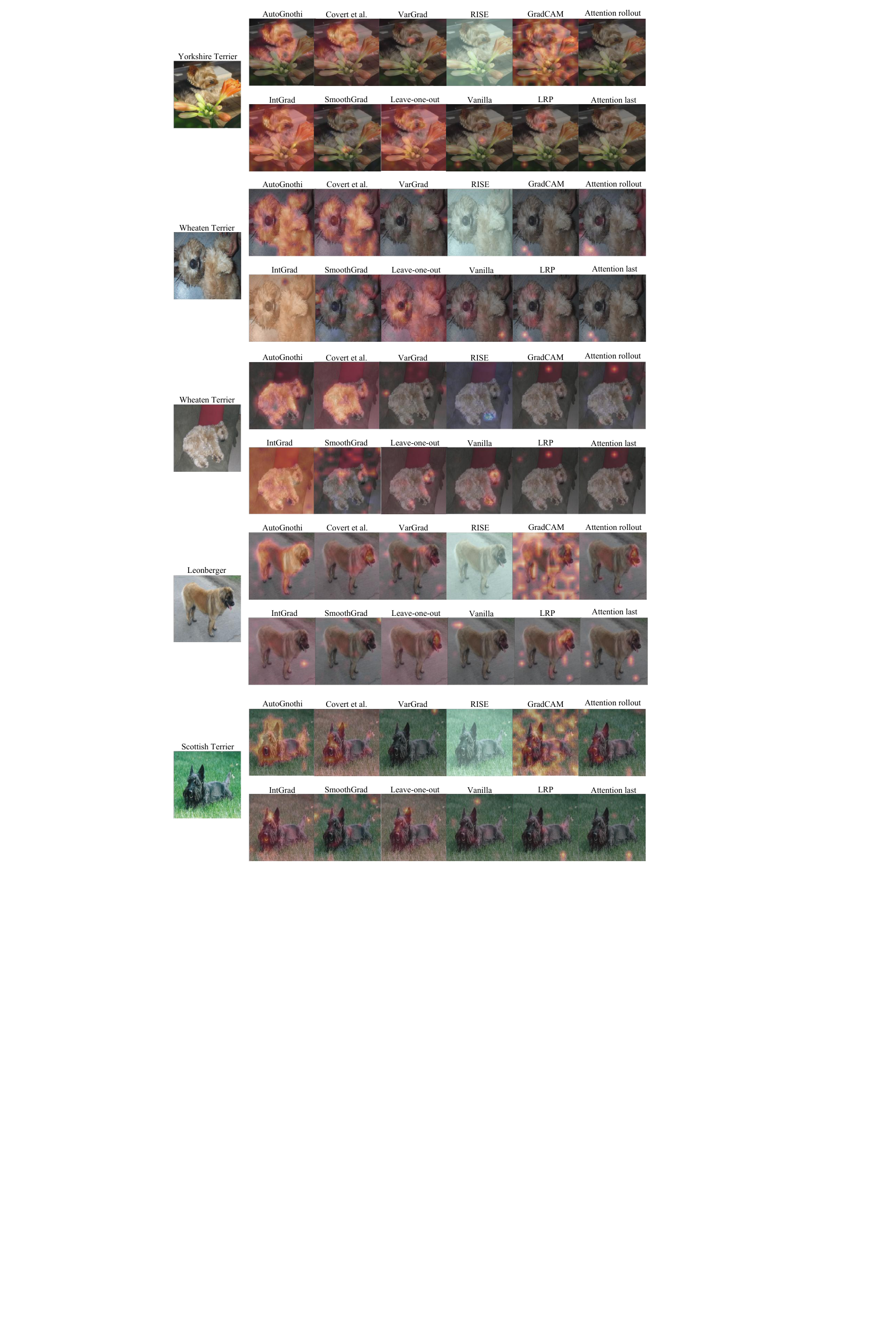}
    \caption{Visualization of ViT-base explanation on Oxford-IIIT Pet dataset (2/2).}
    \label{fig:app_pet_2}
\end{figure}

\begin{figure}[tb!]
    \centering
    \includegraphics[width=0.8\linewidth]{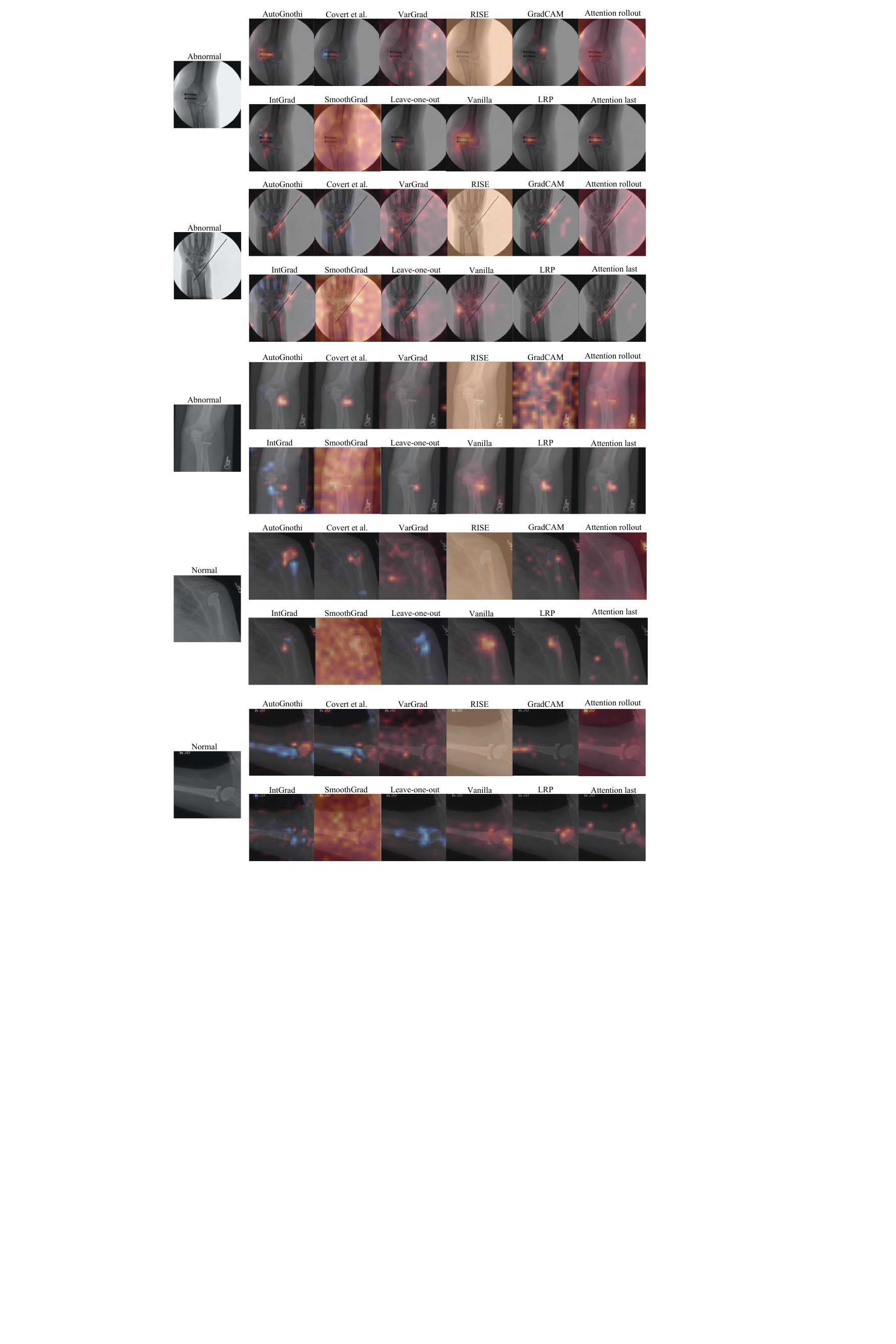}
    \caption{Visualization of ViT-base explanation on MURA dataset (1/2).}
    \label{fig:app_mura_1}
\end{figure}

\begin{figure}[tb!]
    \centering
    \includegraphics[width=0.8\linewidth]{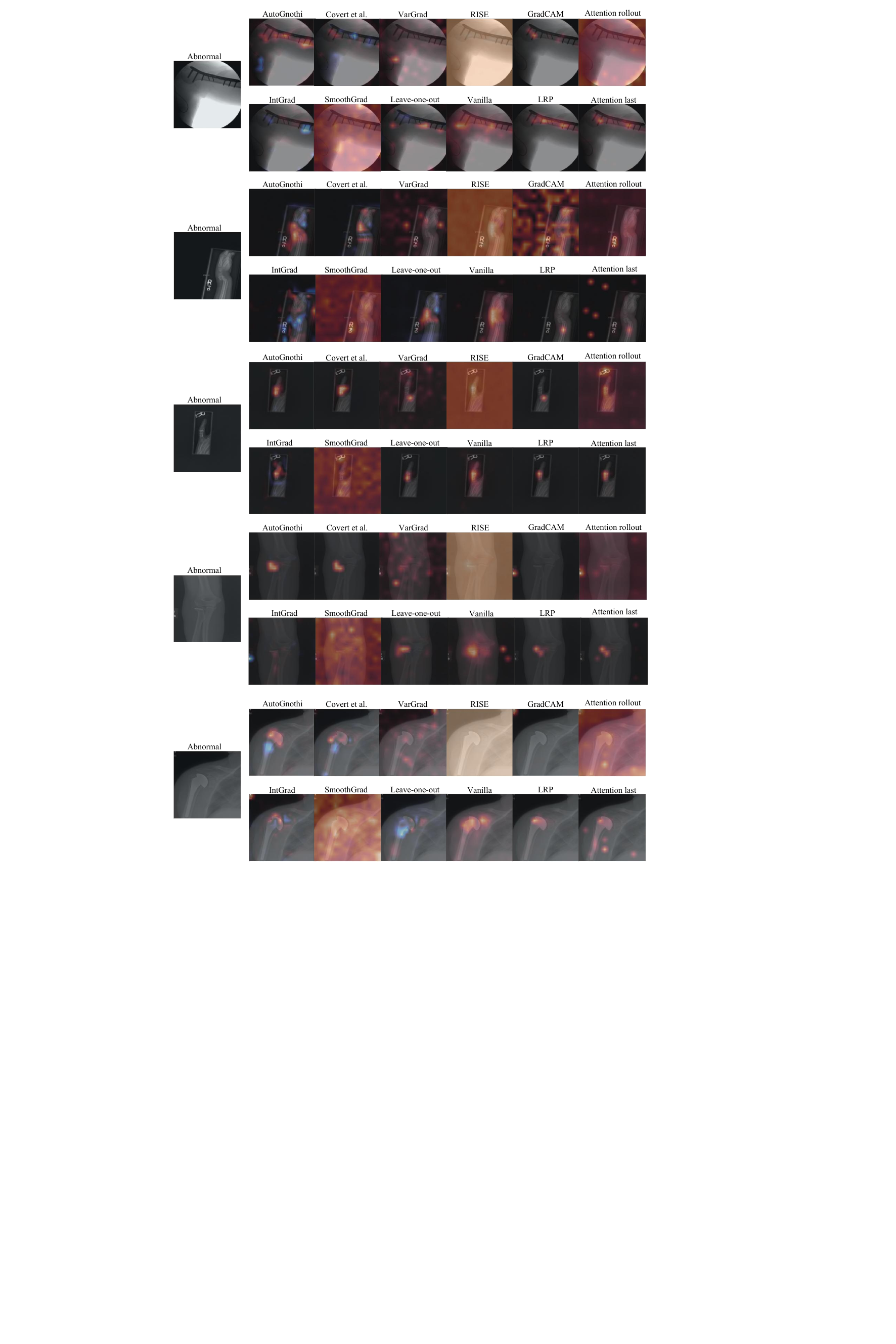}
    \caption{Visualization of ViT-base explanation on MURA dataset (2/2).}
    \label{fig:app_mura_2}
\end{figure}

\end{document}